\providecommand{\citet}[1]{\citeauthor{#1}~(\citeyear{#1})}
\definecolor{blue}{rgb}{0,0,1}
\newcommand*{\inlineequation}[2][]{%
  \begingroup
    \refstepcounter{equation}%
    \ifx\\#1\\%
    \else
      \label{#1}%
    \fi
    \relpenalty=10000 %
    \binoppenalty=10000 %
    \ensuremath{%
      #2%
    }%
    ~\@eqnnum
  \endgroup
}
\newcommand{\xqedhere}[2]{%
  \rlap{\hbox to#1{\hfil\llap{\ensuremath{#2}}}}}
\DeclareMathOperator*{\argmin}{argmin}
\DeclareMathOperator*{\argmax}{argmax}
\newcommand*{\vv}[1]{\vec{\mkern0mu#1}}
\newcommand{\bgmodel}{\mathcal{B}}
\newcommand{\posgmodel}{\mathcal{P}}
\newcommand{\numeq}[1]{\mathrel{\overset{\{#1\}}{=}}}
\newcommand{\format}[3]{{{#1}_{#2}^{#3}}}
\newcommand{\stat}[2][]{\sigma_{#1}^{#2}}
\newcommand{\statcond}[2][]{\format{\sigma}{c,#1}{#2}}
\newcommand{\statmarg}[2][]{\format{\sigma}{m,#1}{#2}}
\newcommand{\vstatmarg}[2][]{\format{\vec{\sigma}}{m,#1}{#2}}
\newcommand{\statspace}[2][]{\Delta( \vv{\Theta}\vphantom{\Theta}_{{#1}}^{{#2}}  )}
\newcommand{\AOH}[2][]{\vv{\theta}\format{\vphantom{\delta}}{#1}{#2}}
\newcommand{\jointAOH}[1]{ \langle \AOH[1]{#1}, \AOH[2]{#1} \rangle }
\newcommand{\jointtype}[1]{ \langle \theta_1^{#1}, \theta_2^{#1} \rangle }
\newcommand{\AOHs}[2][]{\vv{\Theta}\format{\vphantom{\delta}}{#1}{#2}}
\newcommand{\st}[2][]{\format{s}{#1}{#2}}
\newcommand{\act}[2][]{\format{a}{#1}{#2}}
\newcommand{\jointact}[1]{ \langle \act[1]{#1}, \act[2]{#1} \rangle }
\newcommand{\obs}[2][]{\format{o}{#1}{#2}}
\newcommand{\jointobs}[1]{\obs[1]{#1} \obs[2]{#1}}
\newcommand{\dr}[2][]{\format{\delta}{#1}{#2}}
\newcommand{\jointdr}[1]{ \langle \dr[1]{#1}, \dr[2]{#1} \rangle}
\newcommand{\ppol}[2][]{\format{\pi}{#1}{#2}}
\newcommand{\pjp}[2][]{\format{\varphi}{#1}{#2}}
\newcommand{\jointppol}[1]{ \langle \ppol[1]{#1}, \ppol[2]{#1} \rangle }
\newcommand{\sdas}[1]{\Delta_{#1}^S}  
\newcommand{\spas}[2][]{\format{\Pi}{#1}{#2}}   
\newcommand{\V}[2][]{\format{V}{#2}{#1}}
\newcommand{\Q}[2][]{\format{Q}{#2}{#1}}
\newcommand{\optformat}[3]{{{#1}^{#2*}_{#3}}}
\newcommand{\optdr}[2][]{\optformat{\delta}{#2}{#1}}
\newcommand{\optV}[2][]{\optformat{V}{#1}{#2}}
\newcommand{\optQ}[2][]{\optformat{Q}{#1}{#2}}
\newtheorem{customtheorem}{Theorem}
\newtheorem{lemma}{Lemma}
\newtheorem{corollary}{Corollary}
\theoremstyle{definition}
\newtheorem{definition}{Definition}
\g@addto@macro\normalsize{
  \setlength\abovedisplayskip{4pt}
  \setlength\belowdisplayskip{4pt}
  \setlength\abovedisplayshortskip{2pt}
  \setlength\belowdisplayshortskip{2pt}
}
\title{Structure in the Value Function of Two-Player\\
       Zero-Sum Games of Incomplete Information}
\author{
    Auke J. Wiggers\institute{
        Scyfer B.V.,
	    email: auke@scyfer.nl
    }
\and
    Frans A. Oliehoek\institute{
       University of Liverpool,
       Universiteit van Amsterdam,
       email: Frans.Oliehoek@liverpool.ac.uk
} 
\and
    Diederik M. Roijers\institute{
        University of Oxford,
        email: diederik.roijers@cs.ox.ac.uk
    } 
}
\begin{document}

\maketitle

\begin{abstract}
Zero-sum stochastic games provide a rich model for competitive decision making.
However, under general forms of state uncertainty as considered in the Partially
Observable Stochastic Game (POSG), such decision making problems are still not
very well understood. This paper makes a contribution to the theory of zero-sum
POSGs by characterizing structure in their value function. In particular, we
introduce a new formulation of the value function for zs-POSGs as a function of
the `plan-time sufficient statistics' (roughly speaking the information
distribution in the POSG), which has the potential to enable generalization over
such information distributions. We further delineate this generalization
capability by proving a structural result on the shape of value function: it
exhibits concavity and convexity with respect to appropriately chosen marginals
of the statistic space.  This result is a key pre-cursor for developing solution
methods that may be able to exploit such structure.  Finally, we show how these
results allow us to reduce a zs-POSG to a ‘centralized’ model with shared
observations, thereby transferring results for the latter, narrower class, to
games with individual (private) observations.
\end{abstract}

\section{Introduction}
Modeling decision making for strictly competitive settings with incomplete information is a field with many promising applications for AI. 
Examples include games such as poker \cite{rubin2011computer} and security settings \cite{jain2011double}. 
In strictly competitive sequential games in which the environment can be influenced through actions, the problem of behaving rationally can be modeled as a \emph{zero-sum Partially Observable Stochastic Game (zs-POSG)}.

Reasoning about zs-POSGs poses a challenge for strategic agents:
they need to reason about their own uncertainty regarding the state of the environment as well as uncertainty regarding the opposing agent. 
As this opponent is trying to minimize the reward that they are maximizing, behaving strategically typically requires stochastic strategies. 
A factor that further complicates the reasoning is that agents not only influence their immediate rewards, but also the \emph{future state of the environment} and \emph{both agents' future observations}. 

In this paper, we prove the existence of structural properties of zs-POSGs, that may be exploited to make reasoning about these models more tractable. 
We take inspiration from recent work for collaborative settings which has shown that it is possible to summarize the past joint policy using so called plan-time sufficient statistics \cite{Oliehoek13IJCAI}, which can be interpreted as the belief of a special type of Partially Observable Markov Decision Process (POMDP) to which the collaborative Decentralized POMDP can be reduced \cite{Dibangoye13IJCAI,MacDermed13NIPS26,Nayyar13TAC}. 
This enabled tackling these problems using solution methods for POMDPs, leading to increases in scalability \cite{Dibangoye13IJCAI}. 

This paper provides a theoretical basis for enabling similar advancements for zs-POSGs.
In particular, we extend results for Dec-POMDPs to the zs-POSG setting by presenting three contributions: 
	\begin{enumerate}
\item 
   Two novel formulations of the value functions in POSGs, one based on past joint policies, and one based on distributions of information in the game called \emph{plan-time sufficient statistics}.
\item 
    A proof that the latter formulation allows for a generalization over the 
    statistics: on every stage, the value function exhibits concavity and convexity in 
    different \emph{subspaces} of statistic space.
\item 
    A reduction of the zs-POSG to a \emph{Non-Observable Stochastic Game}, 
    which in turn allows us to shows that certain properties previously proven for 
    narrower classes of games generalize to the more general zs-POSG considered
    here.
\end{enumerate}
This is the first work that gives insight in how the value function of a zs-POSG generalizes over the space of sufficient statistics.
We argue that this result may open up the route for new solution methods.

\section{Background}
\label{sec:background}
In this section we provide the necessary background to explain our contributions. 
We defer a treatment of related work to
Section \ref{sec:related_work}, where we can more concisely point out the differences to our work.

This paper focuses on zero-sum games of incomplete information where the number of states, actions, observations and the horizon are finite.
We examine games where the hidden state is static, and games with dynamic state (i.e., it changes over time).
We assume \emph{perfect recall}, i.e., agents recall their own past actions and observations, 
and assume that all elements of the game are \emph{common knowledge} among the agents \cite[Chapter~5]{Osborne+Rubinstein94}.

\subsection{Zero-Sum One-Shot Games}
\label{sec:background:nfgs}
We start by describing one-shot (static) games. 
\begin{definition} A \textbf{normal form game} (NFG) is a tuple 
    $ \mathcal{N} = \langle I, \mathcal{A}, \mathcal{R}\rangle$:
\begin{itemize}
\item $I = \{1, 2\}$ is the set of 2 agents,
\item $\mathcal{A} = \mathcal{A}_1 \times \mathcal{A}_2$ is the set of joint actions $a = \langle a_1, a_2 \rangle$, also called (joint) strategies,
\item $\mathcal{R} = \{R_1, R_2\}$ is the set of reward (or `payoff') functions
    for the agents: $R_i : \mathcal{A} \rightarrow \mathbb{R}$ is the reward
    function for agent~i,
\end{itemize}
\end{definition}
\noindent 
In the case of a \emph{zero-sum} NFG (zs-NFG), we have that $R_1(a) = -R_2(a), \forall a$. In a zs-NFG, 
we will define the following quantities (and the associated strategies):
\begin{definition}
    The \textbf{maxmin value} (for agent 1) of a 2 player zero-sum game 
    is defined as
    $
    V_{maxmin}(\mathcal{N}) = \max_{a_1} \min_{a_2} R_1(a_1, a_2)
    $.
\end{definition}
\begin{definition}
    The \textbf{minmax value} (for agent 1) of a 2 player zero-sum game is defined as
    $
    V_{minmax}(\mathcal{N}) =  \min_{a_2} \max_{a_1} R_1(a_1, a_2)
    $.
\end{definition}

The min-max theorem states that  
$\forall{\mathcal{N}}\;\; V_{maxmin}(\mathcal{N}) \leq V_{minmax}(\mathcal{N})$ 
\cite{Boyd04book}. 
In case of equality, we say that 
$V(\mathcal{N}) = V_{maxmin}(\mathcal{N}) = V_{minmax}(\mathcal{N})$ is the
\textbf{value} of the game. 

A \emph{Nash Equilibrium} (NE) is a joint strategy from which no agent has an
incentive to unilaterally deviate. In zs-NFGs, if an NE exists, then it coincides with the
value of the game.  
That is, the value of a game is the value attained when
both agents follow the strategy specified by the NE. Moreover any NEs, also called saddle points in this context,
correspond to maxmin-strategies for the players \cite[Proposition 22.2]{Osborne+Rubinstein94}. 
We will refer to such strategies as \emph{rational} strategies.

\providecommand{\Mu}{M}
The value is not guaranteed to exist in all zs-NFGs. If the action sets are
convex and compact, if 
$\forall a_2, \text{ the mapping } a_1 \to R_1(a_1, a_2)$ is a concave function,
and if
 $\forall a_1,\; a_2 \to R_1(a_1, a_2)$ is a convex function,
then the value exists and the zs-NFG has a  Nash equilibrium (i.e., a saddle point) \cite[p. 134]{aubin1998optima}, \cite[Proposition 20.3]{Osborne+Rubinstein94}.
In games where actions are discrete, an NE (and corresponding value) may not exist. 
However, when mixed strategies are allowed, we can convert
to a continuous action NFG: 
$\langle I, \Mu_1 \times \Mu_2, \mathcal{R}' \rangle$, where
$\mu_i \in \Mu_i$ specifies a probability distribution over actions, and where
$R_i'(\mu_1, \mu_2) = \sum_{a_1} \mu_1(a_1) \sum_{a_2} \mu_2(a_2) R_i(a_1,a_2)$.
Furthermore, it can now be shown that the utility function $R'(\mu_1, \mu_2)$
in terms of mixed strategies is concave in $\mu_1$ for each $\mu_2$, and convex
in $\mu_2$ for each $\mu_1$, such that the aforementioned assumptions needed for the existence 
of a saddle-point hold
(i.e., an NE exists \cite[p.239]{Boyd04book}).

\subsection{Zero-Sum Bayesian Games}
\label{sec:background:bgs}
Here we consider  \emph{zero-sum Bayesian Games} (zs-BGs), in which agents
simultaneously select an action based on an individual observation (often
referred to as their \emph{type}).

\begin{definition}
A \textbf{zs-BG} is defined as a tuple
 $ \bgmodel = \langle I, \Theta, \mathcal{A}, R, \sigma \rangle$:
\begin{itemize}
\item $I = \{1, 2\}$ is the set of 2 agents,
\item $\Theta = \Theta_1 \times \Theta_2$ is the finite set of joint types $\theta = \langle \theta_1, \theta_2 \rangle$,
\item $\mathcal{A} = \mathcal{A}_1 \times \mathcal{A}_2$ is the finite set of joint actions $a = \langle a_1, a_2 \rangle$,
\item $R: \Theta \times \mathcal{A} \rightarrow \mathbb{R}$ is the reward function for agent 1,
\item $\sigma \in \Delta(\Theta)$ is the probability distribution over joint types.
\end{itemize}
\end{definition}
In this paper we treat finite zs-BGs, where the sets of actions and types are finite. 
A pure strategy, to which we refer as a \emph{pure decision rule}, is a mapping from types to actions.  
A \emph{stochastic decision rule} $\delta_i \in \sdas i$ is a mapping from types to probability distributions over the set of actions, denoted as $\delta_i(a_i | \theta_i)$, $\sdas{i}$ is the space of such mappings.
Given a joint decision rule $\delta = \langle \delta_1, \delta_2 \rangle$, the value is:

\begin{IEEEeqnarray}{rCl}
Q_\text{BG}(\bgmodel, \delta) \triangleq 
		\sum_{\theta} \sigma(\theta) 
			\sum_{a} 
				\delta(a | \theta) R(\theta, a).
\label{eq:bg:qvalue-definition}
\end{IEEEeqnarray}
where $\delta(a|\theta)\triangleq\delta_{1}(a_{1}|\theta_{1})\delta_{2}(a_{2}|\theta_{2})$.
The case of pure decision rules is covered by treating them as degenerate stochastic policies. 

There are two ways to reduce a zs-BG to a zs-NFG. First, we can simply reinterpret $Q_\text{BG}(\bgmodel,\delta)$ 
as a payoff function  $R(\delta_1,\delta_2)$, such that the BG corresponds to
an NFG with continuous action sets $\sdas i$ for each player $i$. 
The conditions for the existence of the value for such a game can be shown to hold, 
so if we suppose that $\langle\delta_{1}^*,\delta_{2}^*\rangle$ is an NE, then the value of the game
can be defined as the maxmin (=minmax) value:

\begin{IEEEeqnarray}{rCl}
	\IEEEeqnarraymulticol{3}{l}{
V_\text{BG}(\bgmodel)	
\triangleq
        Q_\text{BG}(\bgmodel,\langle\delta_{1}^*,\delta_{2}^*\rangle) 
=
} \nonumber  \\ 
    \max_{\delta_{1}\in\Delta_{1}^{S}}\min_{\delta_{2}\in\Delta_{2}^{S}}
        Q_\text{BG}(\bgmodel,\langle\delta_{1},\delta_{2}\rangle) 
=
    \min_{\delta_{2}\in\Delta_{2}^{S}}\max_{\delta_{1}\in\Delta_{1}^{S}}
        Q_\text{BG}(\bgmodel,\langle\delta_{1},\delta_{2}\rangle) . \hspace{5mm}
\label{eq:bg:value-definition}
\end{IEEEeqnarray}

Alternatively, one can reinterpret the BG as an NFG with finite action sets corresponding
to the pure decision rules. This then leads to a similar $R'(\mu_1, \mu_2)$ formulation where the 
mixed strategies $\mu_i$ now are distributions over pure decision rules.
Again this formulation will satisfy the required assumptions on
concavity/convexity, such that this reinterpretation leads
to the same logical conclusion that the value of the finite zs-BG exists
(these dual perspectives are possible due to the one-one correspondence between stochastic decision
rules and mixed strategies).

These reductions imply that solution methods for zs-NFGs (e.g., via linear programming \cite{Shoham08book}) can be used to solve zs-BGs. 
However, such an approach does not scale well --- a more efficient solution method is to convert the zs-BG to \emph{sequence
form} \cite{koller1994fast}.

\subsection{Zero-sum POSGs}
\label{sec:background:posgs}
A zero-sum Partially Observable Stochastic Game (zs-POSG) is a model for multi-agent decision making under uncertainty in zero-sum sequential games where the state changes over time, and the agents simultaneously choose actions at every stage.
\begin{definition}
    \label{def:POSG}
A \textbf{finite zs-POSG} is defined as a tuple 
$\posgmodel = \langle h, I, \mathcal{S}, \mathcal{A}, \mathcal{O}, T, O, R, b^0 \rangle$:
\begin{itemize}
\item $h$ is the horizon,
\item $I = \{1, 2\}$ is the set of $2$ agents,
\item $\mathcal{S}$ is the finite set of states $s$,
\item $\mathcal{A} = \mathcal{A}_1 \times \mathcal{A}_2$ is the finite set of joint actions $a = \langle a_1, a_2 \rangle$,
\item $\mathcal{O} = \mathcal{O}_1 \times \mathcal{O}_2$ is the finite set of joint observations $o = \langle o_1,  o_2 \rangle$,
\item $T$ is the transition function Pr($s^{t+1} | s^t, \act{t} )$,
\item $O$ is the observation function Pr($o^{t+1} | s^{t+1}, \act{t})$,
\item $R : S \times \mathcal{A} \times S \rightarrow \mathbb{R}$ is the reward function for agent 1,
\item $b^0 \in \Delta(\mathcal{S})$ is the probability distribution over states.
\end{itemize}
\end{definition}
In the zs-POSG, we aim to find maxmin-strategies
and corresponding value.
Let a \emph{pure policy} for agent $i$ be a mapping from individual action-observation histories (AOHs)
$\AOH[i]{t} =  \langle \act[i]{0}, \obs[i]{1}, \ldots, \act[i]{t-1}, \obs[i]{t} \rangle $ to actions. 
Let a \emph{stochastic policy} for agent $i$
be a mapping from individual AOHs
to a probability distribution over actions, denoted as
$\pi_i(\act[i]{t} | \AOH[i]{t} )$.
An individual policy defines action selection of one agent on every stage of the game, and is essentially a sequence of individual decision rules
$\pi_i = \langle \dr[i]{0} \ldots \dr[i]{h-1} \rangle$.
We define the \emph{past individual policy} as a tuple of decision rules $\pjp[i]{t} = \langle \dr[i]{0}, \ldots, \dr[i]{t-1} \rangle$, and define the tuple containing decision rules from stage $t$ to $h$ as the \emph{partial individual policy} $\ppol[i]{t} = \langle \dr[i]{t}, \ldots, \dr[i]{h-1} \rangle$.

As in zs-BGs, it is theoretically possible to convert a zs-POSG to a zs-NFG and solve using standard methods, but this is infeasible in practice. An alternative is to converting the zs-POSG to an extensive form game (EFG) and solve it in sequence form \cite{koller1994fast}. While this is more efficient than the NFG route, it is still intractable: the resulting EFG is huge since its size depends on the number of full histories (trajectories of joint actions, joint observations, and
states) \cite{oliehoek2006dec}.

With the goal of opening paths to completely new approaches of tackling
zs-POSGs, this paper focuses on the description of the \textbf{value function}
of zs-POSGs, which (in contrast to the \emph{value} of a game as defined above)
is a function mapping from some notion of `state' of a game to the expected
value (for agent 1). Similar to how the `value' is defined by rational
strategies, we will use the term `value function' for a function that captures
the future expected rewards under a rational joint policy. However, since one
can also reason about the value of non-rational policies, we will refer to the
`rational value function' if there is a need to clarify.

\section{Structure in One-Shot Value}
\label{sec:family_of_bgs}
In order to provide a value function definition for the
sequential setting (in Section \ref{sec:posgs}), we will rely on an intermediate
result for one-shot games developed in this section. 
In particular, we introduce the concept of a Family of zero-sum Bayesian Games, 
for which we will introduce the joint type distribution as a suitable notion of `state' and prove that its value function exhibits certain concave/convex properties with respect to this notion.

\subsection{Families of Bayesian Games}
Here we consider the notion of a \emph{family} of (zs)-BGs. Intuitively,
different stages $t$ of a POSG are similar to a BG: each agent has a privately
observed history, which corresponds to its type. However, the probabilities of
these histories might depend on how the game was played in earlier stages.
As such, we will need to reason about families of BGs.
\newcommand{\fobg}{\mathcal{F}}
\begin{definition}
A \textbf{Family of Bayesian Games}, $\fobg = \langle I, \Theta, \mathcal{A}, R \rangle$, is the set of Bayesian Games of the form $\langle I, \Theta, \mathcal{A}, R, \sigma \rangle$ for which $I, \Theta, \mathcal{A}$ and $R$ are identical.
\label{def:fobg}
\end{definition}
\vspace{.5mm}
\noindent Let $\fobg$ be a Family of \emph{zero-sum} Bayesian Games.
By providing a joint type distribution, $\fobg(\sigma)$ indicates a particular zs-BG.
We generalize \eqref{eq:bg:qvalue-definition} and \eqref{eq:bg:value-definition} as follows:

\begin{IEEEeqnarray}{rCl}
\Q{\fobg}(\sigma,\delta) &\triangleq&  Q_\text{BG}({\fobg}(\sigma) , \delta ),
\label{eq:fobg:qvalue-definition} \\
\optV{\fobg}(\sigma) &\triangleq&  V_\text{BG}({\fobg}(\sigma)).
\label{eq:fobg}
\end{IEEEeqnarray}
As each ${\fobg}(\sigma)$ is a zs-BG, the rational value function $\optV{\fobg}(\sigma)$ can be written as:

\begin{IEEEeqnarray}{rCl}
\optV{\fobg}(\sigma) 
    = 
		\max\limits_{\delta_1 \in \sdas{1}} 
		\min\limits_{\delta_2 \in \sdas{2}} 
		\Q{\fobg}(\sigma, \langle \delta_1, \delta_2 \rangle).
\label{eq:fobg:redefinition}
\end{IEEEeqnarray}
We define \emph{best-response value functions} that give the best-response value to a decision rule of the opposing agent:

\begin{IEEEeqnarray}{rCl}
\V[\text{BR1}]{\fobg}(\sigma, \delta_2) &\triangleq&
   \max\limits_{\delta_1 \in \sdas{1}} 
	\Q{\fobg}(\sigma, \langle \delta_1, \delta_2 \rangle) , \
\label{eq:fobg:bestresponsedef:max} \\
\V[\text{BR2}]{\fobg}(\sigma, \delta_1) &\triangleq&
   \min\limits_{\delta_2 \in \sdas{2}} 
	\Q[]{\fobg}(\sigma, \langle \delta_1, \delta_2 \rangle) .
\label{eq:fobg:bestresponsedef:min}
\end{IEEEeqnarray}
We remind the reader that, for each $\sigma$, 
$\Q[]{\fobg}(\sigma, \langle \delta_1, \delta_2 \rangle)$ 
exhibits concavity and convexity in the space of \emph{decision rules}, as
discussed in Section \ref{sec:background:bgs}.

\subsection{Concavity/Convexity of the Value Function}
\label{sec:family_of_bgs:concavity-convexity}
We saw in the previous section that the concave/convex shape of the utility
function expressed in the space of decision rules follows directly from known
results.  Here we provide a novel result: $\optV{\fobg}$ exhibits a similar
concave/convex shape \emph{in terms of type distributions}. This is important,
as it provides insight on how the \emph{distribution of information} (in
addition to the distribution of actions) affects the value of the game, which
is key to enabling generalization of the value in different parts of sequential
games.
To provide this formulation, we decompose $\sigma$ into a \emph{marginal} term
$\sigma_{m,i}$ and a \emph{conditional} term $\sigma_{c,i}$, here shown for
$i=1$:

\begin{IEEEeqnarray}{rCl}
\sigma_{m,1}(\theta_1) 
	&\triangleq &
		\sum_{\theta_2 \in \Theta_2} \sigma(\theta_1 \theta_2) , \quad \quad
\sigma_{c,1}( \theta_2 | \theta_1) 
	\triangleq 
     \frac{ \sigma(\theta_1 \theta_2) }
			 { \sigma_{m,1}(\theta_1) }. \hspace{4mm}
\label{eq:fobg:marginal-conditional-definition}
\end{IEEEeqnarray}

The terms $\sigma_{m,2}$ and $\sigma_{c,2}$ are defined similarly.
We refer to the simplex $\Delta(\Theta_i)$ containing marginals $\sigma_{m,i}$ as the \emph{marginal-space} of agent $i$. 
We write $\stat{t} = \statmarg[i]{t} \statcond[i]{t}$ for short.

We will first show that the best-response value functions defined in \eqref{eq:fobg:bestresponsedef:max} and \eqref{eq:fobg:bestresponsedef:min} are linear in their respective marginal-spaces.
Using this result, we prove that $\optV{\fobg}$ exhibits concavity in $\Delta(\Theta_1)$ for every $\sigma_{c,1}$, and convexity in $\Delta(\Theta_2)$ for every $\sigma_{c,2}$.
For this purpose, let us define a vector that contains the reward for agent 1 for each individual type $\theta_1$, given $\sigma_{c,1}$ and given that agent 2 follows decision rule $\delta_2$:

\begin{equation}   
    ~\hspace{-2mm}
    \vec{r}_{[ \sigma_{c,1}, \delta_2 ]}(\theta_1)
\triangleq
	\hspace{-0.5mm}
    \max\limits_{a_1 \in \mathcal{A}_1} \bigg[ 
    \sum_{\theta_2} \hspace{-.35mm}
    \sigma_{c,1}(\theta_2 | \theta_1)
    \sum_{a_2} 
    \hspace{-.35mm}
    \delta_2(a_2 |\theta_2)	 
    R(\theta, \act{}) \bigg].
    ~\hspace{-2mm}
    \label{eq:fobg:rvector}
\end{equation}

\noindent The vector $\vec{r}_{[ \sigma_{c,2}, \delta_1 ]}$ is defined analogously. 

Now we can state an important lemma that shows that the best-response value
functions are linear functions in the marginal space.

\begin{lemma}
(1) $\V[\text{BR1}]{\fobg}$ is linear in $\Delta(\Theta_1)$ for all $\sigma_{c,1}$ and $\delta_2$, and (2) $\V[\text{BR2}]{\fobg}$ is linear in $\Delta(\Theta_2)$ for all $\sigma_{c,2}$ and $\delta_1$: 
\begin{enumerate}
\item \inlineequation[eq:lemma:VBR_linear_in_statspace_FOBG:result1]{	
			\V[\text{BR1}]{\fobg}(\sigma_{m,1} \sigma_{c,1}, \delta_2) 
			=  \vec{\sigma}_{m,1} \cdot \vec{r}_{[ \sigma_{c,1}, \delta_2 ]}, \hfill
		}
\item \inlineequation[eq:lemma:VBR_linear_in_statspace_FOBG:result2]{
			\V[\text{BR2}]{\fobg}(\sigma_{m,2} \sigma_{c,2}, \delta_1) 
			=  \vec{\sigma}_{m,2} \cdot \vec{r}_{[ \sigma_{c,2}, \delta_1 ]}. \hfill
		}
\end{enumerate}
\label{lemma:VBR_linear_in_statspace_FOBG}
\end{lemma}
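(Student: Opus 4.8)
The plan is to prove part~(1) by a direct computation and then obtain part~(2) by the symmetric argument in which the two agents swap roles and $\max$ is replaced by $\min$. For part~(1), the first step is to unfold $\Q{\fobg}(\sigma,\langle\delta_1,\delta_2\rangle)$ using \eqref{eq:fobg:qvalue-definition} and \eqref{eq:bg:qvalue-definition}, substitute $\delta(a|\theta)=\delta_1(a_1|\theta_1)\delta_2(a_2|\theta_2)$, and insert the decomposition $\sigma(\theta_1\theta_2)=\sigma_{m,1}(\theta_1)\,\sigma_{c,1}(\theta_2|\theta_1)$ from \eqref{eq:fobg:marginal-conditional-definition}. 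Regrouping the $\theta_2$- and $a_2$-sums to the inside and the $\theta_1$- and $a_1$-sums to the outside gives
\begin{align*}
\Q{\fobg}(\sigma,\langle\delta_1,\delta_2\rangle)
&=\sum_{\theta_1}\sigma_{m,1}(\theta_1)\sum_{a_1}\delta_1(a_1|\theta_1)\,g_{\delta_2}(\theta_1),\\
g_{\delta_2}(\theta_1)&\triangleq\sum_{\theta_2}\sigma_{c,1}(\theta_2|\theta_1)\sum_{a_2}\delta_2(a_2|\theta_2)\,R(\theta,a),
\end{align*}
where the coefficients $g_{\delta_2}(\theta_1)$ depend only on $\sigma_{c,1}$ and $\delta_2$.

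The second step is to carry out $\max_{\delta_1\in\sdas{1}}$. The key point is that a stochastic decision rule chooses the distribution $\delta_1(\cdot\,|\,\theta_1)$ over $\mathcal{A}_1$ independently for each type $\theta_1$, and each weight $\sigma_{m,1}(\theta_1)$ is non-negative; hence the maximum over $\delta_1$ of the outer sum equals the sum of $\sigma_{m,1}(\theta_1)\,\max_{\delta_1(\cdot|\theta_1)}\sum_{a_1}\delta_1(a_1|\theta_1)\,g_{\delta_2}(\theta_1)$. For fixed $\theta_1$ the inner objective is linear in the distribution $\delta_1(\cdot|\theta_1)$, so its maximum over the simplex is attained at a vertex and equals $\max_{a_1\in\mathcal{A}_1}g_{\delta_2}(\theta_1)$, which is exactly $\vec{r}_{[\sigma_{c,1},\delta_2]}(\theta_1)$ of \eqref{eq:fobg:rvector}. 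Collecting terms gives $\V[\text{BR1}]{\fobg}(\sigma_{m,1}\sigma_{c,1},\delta_2)=\sum_{\theta_1}\sigma_{m,1}(\theta_1)\,\vec{r}_{[\sigma_{c,1},\delta_2]}(\theta_1)=\vec{\sigma}_{m,1}\cdot\vec{r}_{[\sigma_{c,1},\delta_2]}$; since $\vec{r}_{[\sigma_{c,1},\delta_2]}$ does not depend on $\sigma_{m,1}$, this is a fixed inner product in $\vec{\sigma}_{m,1}$, hence linear over $\Delta(\Theta_1)$, which proves~(1). Part~(2) is obtained verbatim after decomposing along agent~2's marginal/conditional and replacing the outer and inner maxima by minima, using the analogously defined $\vec{r}_{[\sigma_{c,2},\delta_1]}$.

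I expect the one place requiring care --- and the main (mild) obstacle --- to be the interchange of $\max_{\delta_1}$ with the sum over types, i.e.\ the claim that the best response decomposes type-by-type; this is precisely where non-negativity of $\sigma_{m,1}$ and the product structure $\delta(a|\theta)=\delta_1(a_1|\theta_1)\delta_2(a_2|\theta_2)$ enter. A second minor point is the degenerate case $\sigma_{m,1}(\theta_1)=0$, in which $\sigma_{c,1}(\cdot\,|\,\theta_1)$ is not fixed by \eqref{eq:fobg:marginal-conditional-definition}; I would handle it by choosing $\sigma_{c,1}(\cdot\,|\,\theta_1)$ arbitrarily there, which is harmless since such a $\theta_1$ contributes $0$ to both sides, so the identity holds for every $\sigma_{m,1}\in\Delta(\Theta_1)$.
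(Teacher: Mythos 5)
Your proposal is correct and follows essentially the same route as the paper's proof: expand $Q_{\fobg}$, pull the decomposition $\sigma = \sigma_{m,1}\sigma_{c,1}$ to the front, and push the maximization over $\delta_1$ inside to a per-type $\max_{a_1}$, yielding the inner product with $\vec{r}_{[\sigma_{c,1},\delta_2]}$. The only quibble is notational --- your $g_{\delta_2}(\theta_1)$ also depends on $a_1$ and should be written $g_{\delta_2}(\theta_1,a_1)$ --- while your explicit justification of the max/sum interchange and the treatment of $\sigma_{m,1}(\theta_1)=0$ are slightly more careful than the paper's one-line statement of that step.
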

\begin{proof}
The proof is listed in Appendix \ref{appendix:vbr_linear_in_statspace}.
\end{proof}

This lemma can now be used to show that the rational value function
$\optV[]{\fobg}$ possess concave and convex structure with respect to
(appropriately chosen marginals of) the space of joint type distributions.

\begin{customtheorem}
    \label{thm:BGsConcaveConvex}
$\optV[]{\fobg}$ is (1) concave in $\Delta(\Theta_1)$ for a given conditional distribution $\sigma_{c,1}$, and (2) convex in $\Delta(\Theta_2)$ for a given conditional distribution $\sigma_{c,2}$.
More specifically, $\optV[]{\fobg}$ is respectively a minimization over linear functions in $\Delta(\Theta_1)$ and a maximization over linear functions in $\Delta(\Theta_2)$:
\begin{enumerate}
\item $	\optV[]{\fobg}(\sigma_{m,1} \sigma_{c,1}) 
		= \min\limits_{\delta_2 \in \sdas{2}} 
			\bigg[	
				\vec{\sigma}_{m,1} \cdot \vec{r}_{[\sigma_{c,1}, \delta_2 ]}
			\bigg]$,
\item $	\optV[]{\fobg}(\sigma_{m,2} \sigma_{c,2}) 
		= \max\limits_{\delta_1 \in \sdas{1}} 
			\bigg[ 
				\vec{\sigma}_{m,2} \cdot \vec{r}_{[\sigma_{c,2}, \delta_1 ]}
			\bigg]$.
\end{enumerate}
\label{the:fobg:concave_and_convex_value_function}
\end{customtheorem}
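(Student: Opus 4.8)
The plan is to derive each claim directly from Lemma~\ref{lemma:VBR_linear_in_statspace_FOBG} together with the minmax characterization of $\optV{\fobg}$ in \eqref{eq:fobg:redefinition}. I will prove claim (1) in detail; claim (2) follows by the symmetric argument with the roles of the two agents exchanged (swapping $\max$/$\min$, $\delta_1$/$\delta_2$, and the marginal/conditional indices).

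First I would start from \eqref{eq:fobg:redefinition}, written as $\optV{\fobg}(\sigma) = \max_{\delta_1} \min_{\delta_2} \Q{\fobg}(\sigma, \langle\delta_1,\delta_2\rangle)$, and use the value existence argument from Section~\ref{sec:background:bgs} (concavity in $\delta_1$, convexity in $\delta_2$, compact convex strategy sets) to swap the order of optimization, giving $\optV{\fobg}(\sigma_{m,1}\sigma_{c,1}) = \min_{\delta_2 \in \sdas{2}} \max_{\delta_1 \in \sdas{1}} \Q{\fobg}(\sigma, \langle\delta_1,\delta_2\rangle) = \min_{\delta_2 \in \sdas{2}} \V[\text{BR1}]{\fobg}(\sigma_{m,1}\sigma_{c,1}, \delta_2)$, where the last equality is just the definition \eqref{eq:fobg:bestresponsedef:max}. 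Then I would substitute the linear form from Lemma~\ref{lemma:VBR_linear_in_statspace_FOBG}(1), namely $\V[\text{BR1}]{\fobg}(\sigma_{m,1}\sigma_{c,1}, \delta_2) = \vec{\sigma}_{m,1}\cdot\vec{r}_{[\sigma_{c,1},\delta_2]}$, to obtain exactly $\optV{\fobg}(\sigma_{m,1}\sigma_{c,1}) = \min_{\delta_2 \in \sdas{2}} \big[\vec{\sigma}_{m,1}\cdot\vec{r}_{[\sigma_{c,1},\delta_2]}\big]$, which is statement (1) of the theorem. The concavity claim is then immediate: for each fixed $\delta_2$ and $\sigma_{c,1}$, the map $\vec{\sigma}_{m,1}\mapsto \vec{\sigma}_{m,1}\cdot\vec{r}_{[\sigma_{c,1},\delta_2]}$ is linear in $\Delta(\Theta_1)$, and a pointwise minimum of a (possibly infinite) family of linear functions is concave. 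For claim (2), the analogous steps yield $\optV{\fobg}(\sigma_{m,2}\sigma_{c,2}) = \max_{\delta_1 \in \sdas{1}} \big[\vec{\sigma}_{m,2}\cdot\vec{r}_{[\sigma_{c,2},\delta_1]}\big]$, a pointwise maximum of linear functions, hence convex in $\Delta(\Theta_2)$.

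The one subtlety I would be careful about is the legitimacy of interchanging $\max_{\delta_1}$ and $\min_{\delta_2}$: this is where the structural hypotheses matter, so I would explicitly invoke the fact established in Section~\ref{sec:background:bgs} that $\Q{\fobg}(\sigma,\langle\delta_1,\delta_2\rangle)$ is concave in $\delta_1$ for each $\delta_2$ and convex in $\delta_2$ for each $\delta_1$ over the compact convex sets $\sdas{1},\sdas{2}$, so that Sion's/von Neumann's minmax theorem applies and the value exists. I expect this to be the main (and essentially only) obstacle; everything else is bookkeeping. I would also note that the conditional term $\sigma_{c,1}$ is held fixed throughout, so that varying $\sigma$ within the fibre over a fixed $\sigma_{c,1}$ is the same as varying only the marginal $\vec{\sigma}_{m,1}$ — making the phrase ``concave in $\Delta(\Theta_1)$ for a given $\sigma_{c,1}$'' precise.
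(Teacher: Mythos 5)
Your proposal is correct and follows essentially the same route as the paper: rewrite $\optV[]{\fobg}$ as $\min_{\delta_2}\V[\text{BR1}]{\fobg}$ via \eqref{eq:fobg:redefinition} and \eqref{eq:fobg:bestresponsedef:max}, substitute the linear form from Lemma~\ref{lemma:VBR_linear_in_statspace_FOBG}, and conclude concavity/convexity as a pointwise min/max of linear functions. The only difference is that you make explicit the $\max$--$\min$ interchange (justified by the existence of the value established in Section~\ref{sec:background:bgs}), a step the paper's two-line proof leaves implicit.
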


\begin{proof}
Filling in the result of Lemma \ref{lemma:VBR_linear_in_statspace_FOBG} gives:

\begin{IEEEeqnarray*}{rCl}
\optV[]{\fobg}(\sigma_{m,1} \sigma_{c,1} ) 
	&\numeq{\ref{eq:fobg:redefinition},
			\ref{eq:fobg:bestresponsedef:max}}&
		 \min\limits_{\delta_2 \in \sdas{2}}
		 	\V[\text{BR1}]{\fobg}(\sigma_{m,1} \sigma_{c,1}, \delta_2) \\
	&\numeq{\ref{eq:lemma:VBR_linear_in_statspace_FOBG:result1}}&
		\min\limits_{\delta_2 \in \sdas{2}} 			
			\bigg[	
				\vec{\sigma}_{m,1} \cdot \vec{r}_{[\sigma_{c,1}, \delta_2 ]}
			\bigg].
\end{IEEEeqnarray*}

\noindent The proof for item 2 is analogous to that of item 1. 
\end{proof}

The importance of this theorem is that it gives direct ways to approximately
generalize the value function using piecewise-linear and convex (concave)
functions.

\section{Structure in zs-POSG Value}
\label{sec:posgs}
Given the result for one-shot games established in the previous section, we are
now in the position to present our main contributions: novel formulations for the
value function, and generalization of the structural result of
Theorem~\ref{thm:BGsConcaveConvex} to the sequential setting.
First, we will introduce a description of the rational value function
based on past joint policies.

\subsection{Past Joint Policies}
\label{sec:posg:past-joint-policies}
To make rational decisions at stage $t$ in the zs-POSG, it is sufficient to know the \emph{past decisions}, which are captured in the past joint policy $\pjp{t}$. 
To show this, the zs-POSG value function in terms of the past joint policy $\pjp{t}$ can be defined in terms of $\pjp{t}$ by extending the formulation by Oliehoek \cite{Oliehoek13IJCAI}.

We define the value function of the zs-POSG at a stage $t$ in terms of a \emph{past joint policy} $\pjp{t}$.
The definition that follows gives the value attained when all agents follow the joint decision rule $\dr{t}$, assuming that in future stages agents will act rationally. That is, the agents follow a rational joint future policy $\ppol[i]{t+1*} = \langle \optdr{t+1} \ldots \optdr{h-1} \rangle$.

We first define the Q-value function at the final stage $t=h-1$, and give an inductive definition of the Q-value function at preceding stages. 
We then define the value function at every stage.
Let immediate reward for a joint AOH and a joint decision rule be defined as:

\begin{IEEEeqnarray*}{rCl}
R( \AOH{t}, \dr{t}) \triangleq 
	 \sum_{\act{t}} \dr{t}(\act{t} | \AOH{t}) 
	 \sum_{\st{t}} 
		\text{Pr} (\st{t}  |  \AOH{t}, b^0)
        R(\st{t}, \act{t}) .   \hspace{5mm}
\end{IEEEeqnarray*} 

\noindent For the final stage $t=h-1$, the Q-value function reduces to this immediate reward, as there is no future value:

\begin{IEEEeqnarray}{rCl}
\optQ{h-1}(\pjp{h-1}, \AOH{h-1}, \dr{h-1}) \triangleq R(\AOH{h-1}, \dr{h-1}).
\label{eq:posg:value-AOH-final-stage}
\end{IEEEeqnarray}

\noindent 
Given an AOH and decision rule at stage $t$, it is possible to find a probability distribution over joint AOHs at the next stage, as a joint AOH at $t+1$ is the joint AOH at $t$ concatenated with joint action $\act{t}$ and joint observation $\obs{t+1}$:

\begin{IEEEeqnarray}{rCl}
\text{Pr} (\AOH{t+1}  | \AOH{t}, \dr{t})   \nonumber
	    \hspace{-0.5mm}
 	&=& 
 		\hspace{-0.5mm} 
 	\text{Pr}( \langle \AOH{t}, \act{t}, \obs{t+1} \rangle | \AOH{t}, \dr{t} ) 
 \hspace{-0.5mm} =  \hspace{-0.5mm} 
	 \text{Pr}( \obs{t+1}  | \AOH{t}, \act{t}) \dr{t}(\act{t} | \AOH{t}) .
\label{eq:posg:probability_AOH_update}
\end{IEEEeqnarray} 

\noindent
For all stages except the final stage $t=h-1$, the value at future stages is propagated to the current stage as follows:

\begin{IEEEeqnarray}{rCl}
\optQ{t}(\pjp{t}, \AOH{t}, \dr{t})	&\triangleq&
	 R(\AOH{t}, \dr{t}) +
        \sum_{\act{t} \in \mathcal{A}} \sum_{\obs{t+1}} 
            \text{Pr}(\AOH{t+1}  |  \AOH{t}, \dr{t}) \nonumber \\      
    &&        \optQ{t+1}(\pjp{t+1}, \AOH{t+1}, \optdr{t+1}).
	\label{eq:posg:value-AOH-pjp}
\vspace{2mm} \\ 
\optQ{t}(\pjp{t}, \dr{t}) &\triangleq&
		\sum_{\AOH{t} \in \AOHs{t}} 
		    \text{Pr} (\AOH{t}  |  b^0, \pjp{t})
		    \optQ{t}(\pjp{t}, \AOH{t}, \dr{t}).
\label{eq:posg:qvalue-pjp}
\end{IEEEeqnarray}

\noindent
We use \eqref{eq:posg:qvalue-pjp} to find rational decision rules for both agents.
Consistent with \eqref{eq:posg:value-AOH-pjp}, we show how to find $\optdr{t+1} = \langle \optdr[1]{t+1}, \optdr[2]{t+1} \rangle$: 

\begin{IEEEeqnarray}{rCl}
\optdr[1]{t+1} &=& \argmax\limits_{\dr[1]{t+1} \in \sdas{1}}  
                     \min\limits_{\dr[2]{t+1} \in \sdas{2}} 
                      \optQ{t+1}(\pjp{t+1}, 
                      			\jointdr{t+1} )  ,
\label{eq:posg:decision-selection-value-pjp1}\\ 
\optdr[2]{t+1} &=& \argmin\limits_{\dr[2]{t+1} \in \sdas{2}}  
                     \max\limits_{\dr[1]{t+1} \in \sdas{1}} 
                      \optQ{t+1}(\pjp{t+1}, 
                      			\jointdr{t+1} ) .
\label{eq:posg:decision-selection-value-pjp2}
\end{IEEEeqnarray}

\noindent
Using \eqref{eq:posg:value-AOH-final-stage}, \eqref{eq:posg:decision-selection-value-pjp1} and \eqref{eq:posg:decision-selection-value-pjp2}, a rational joint decision rule $\optdr{h-1}$ can be found by performing a maxminimization over immediate reward.
Evaluation of $\optQ{h-1}(\pjp{t}, \optdr{h-1})$ gives us the value at stage $t=h-1$, and
\eqref{eq:posg:value-AOH-pjp} propagates the value to the preceding stages. 
As such, rationality for all stages follows by induction. 
The value function can now be defined in terms of the past joint policy as:

\begin{IEEEeqnarray}{rCl}
\optV{t}(\pjp{t}) = 
	\max\limits_{\dr[1]{t} \in \sdas{1}} 
	\min\limits_{\dr[2]{t} \in \sdas{2}}
			\optQ{t}(\pjp{t}, \jointdr{t}).
\label{eq:posg:value-pjp}
\end{IEEEeqnarray}

\noindent 
By \eqref{eq:posg:value-AOH-final-stage}, \eqref{eq:posg:decision-selection-value-pjp1} and \eqref{eq:posg:decision-selection-value-pjp2}, $\optdr{t}$ is dependent on $\optdr{t+1}$, and thus on the rational future joint policy.
However, $\optdr{t+1}$ can only be found if past joint policy $\pjp{t+1}$, which includes $\dr{t}$, is known.
As such, \eqref{eq:posg:value-pjp} 	can not be used to form a backward inductive approach directly.

\subsection{Plan-Time Sufficient Statistics}
\label{sec:posg:plantime-sufficient-statistics}
A disadvantage of the value function definition from the previous section is that the value at a stage $t$ depends on all the \emph{joint} decisions from stage 0 to $t$.
We propose to define the value function in terms of a plan-time sufficient statistic that summarizes many past joint policies.
Furthermore, we give formal proof that this value function exhibits a concave/convex shape in statistic-space that may be exploitable.

\begin{definition} 
The \textbf{plan-time sufficient statistic} for a general past joint policy $\pjp{t}$, assuming $b^0$ is known,
is a distribution over joint AOHs:
$
\stat{t}(\AOH{t}) \triangleq 
       \text{Pr}(\AOH{t}  |  b^0, \pjp{t} )
$. 
\label{def:plantimesufficientstatistic}
\end{definition}

In the collaborative Dec-POMDP case, a plan-time sufficient statistic fully captures the influence of the past joint policy.
We prove that this also holds for the zs-POSG case, thereby validating the previous definition and the name `sufficient statistic`.
We aim to express the value for a given decision rule $\dr{t}$ in terms of a plan-time sufficient statistic, given that the agents act rationally at later stages. 
First, we define the update rule for plan-time sufficient statistics:

\begin{IEEEeqnarray}{rCl}
\stat{t+1}(\AOH{t+1})
    \triangleq \text{Pr}(\obs{t+1}  |  \AOH{t}, \act{t})
      \dr{t}(\act{t}  |  \AOH{t}) 
      \stat{t}(\AOH{t}).
\label{eq:posg:statistic-update-rule}
\end{IEEEeqnarray}

\noindent
 The Q-value function at stage $h-1$ reduces to the immediate reward:

\begin{IEEEeqnarray}{rCl}
\optQ{h-1}(\stat{h-1}, \AOH{h-1}, \dr{h-1}) \triangleq R(\AOH{h-1}, \dr{h-1}).
\label{eq:posg:value-AOH-final-stage-ss}
\end{IEEEeqnarray}

\noindent 
We then define the Q-value for all other stages similar to \eqref{eq:posg:value-AOH-pjp}, \eqref{eq:posg:qvalue-pjp}:

\begin{IEEEeqnarray}{rCl}
\optQ{t}(\stat{t}, \AOH{t}, \dr{t}) &\triangleq&
        R(\AOH{t}, \dr{t}) + 
        \sum_{\act{t}} \sum_{\obs{t+1}} 
            \text{Pr}(\AOH{t+1}  |  \AOH{t}, \dr{t})
    \nonumber \\  &&\,
            \optQ{t+1}(\stat{t+1}, \AOH{t+1}, \optdr{t+1}). \hspace{5mm}
\label{eq:posg:value-AOH-ss} 
\vspace{2mm}\\ 
\optQ{t}(\stat{t}, \dr{t}) &\triangleq&
		\sum_{\AOH{t}} 
		    \stat{t}(\AOH{t})
		    \optQ{t}(\stat{t}, \AOH{t}, \dr{t}).
\label{eq:posg:qvalue-ss}
\end{IEEEeqnarray}

\noindent Rational decision rules can be found using \eqref{eq:posg:qvalue-ss}:

\begin{IEEEeqnarray}{rCl}
\optdr[1]{t+1} &=& \argmax\limits_{\dr[1]{t+1}\in\sdas{1}} 
                     \min\limits_{\dr[2]{t+1}\in\sdas{2}} 
                      \optQ{t+1}(\stat{t+1}, 
                                 \jointdr{t+1}), \hspace{5mm} 
\label{eq:posg:decision-selection-value-ss1} \\
\optdr[2]{t+1} &=& \argmin\limits_{\dr[2]{t+1}\in\sdas{2}} 
                     \max\limits_{\dr[1]{t+1}\in\sdas{1}} 
                      \optQ{t+1}(\stat{t+1}, 
                               \jointdr{t+1}). \hspace{5mm} 
\label{eq:posg:decision-selection-value-ss2}
\end{IEEEeqnarray} 

\noindent 
We formally prove that the statistic $\stat{t}$ provides sufficient information for rational decision-making in the zs-POSG.

\begin{lemma}
$\stat{t}$ is a sufficient statistic for the value of the zs-POSG, i.e.
$\optQ{t}(\stat{t}, \AOH{t}, \dr{t}) = \optQ{t}(\pjp{t}, \AOH{t}, \dr{t}), \forall t \in 0 \ldots h -  1, \forall \AOH{t} \in \AOHs{t}, \forall \dr{t}$.
\label{lemma:sufficiency_statistic}
\end{lemma}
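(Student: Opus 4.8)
The natural approach is backward induction on the stage $t$, running from $t=h-1$ down to $t=0$, showing at each step that $\optQ{t}(\stat{t}, \AOH{t}, \dr{t}) = \optQ{t}(\pjp{t}, \AOH{t}, \dr{t})$ for every $\AOH{t}$ and every $\dr{t}$. The key observation driving the whole argument is that the statistic $\stat{t}(\AOH{t}) = \text{Pr}(\AOH{t} \mid b^0, \pjp{t})$ already appears \emph{inside} the past-joint-policy formulation via \eqref{eq:posg:qvalue-pjp}, and more importantly that the quantities the $\pjp{t}$-based Q-value actually uses---namely $\text{Pr}(\st{t} \mid \AOH{t}, b^0)$ in the immediate reward $R(\AOH{t}, \dr{t})$, and $\text{Pr}(\obs{t+1} \mid \AOH{t}, \act{t})$ in the AOH-transition \eqref{eq:posg:probability_AOH_update}---depend on the model $\posgmodel$ (through $T$, $O$, $b^0$) and on $\AOH{t}$, but \emph{not} on $\pjp{t}$ at all. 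So the only place $\pjp{t}$ enters $\optQ{t}(\pjp{t}, \AOH{t}, \dr{t})$ is as the first argument passed recursively to $\optQ{t+1}$, and I must show it can be replaced by $\stat{t+1}$ throughout.

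\textbf{Base case.} For $t = h-1$, \eqref{eq:posg:value-AOH-final-stage} and \eqref{eq:posg:value-AOH-final-stage-ss} give identical definitions, $\optQ{h-1}(\stat{h-1}, \AOH{h-1}, \dr{h-1}) = R(\AOH{h-1}, \dr{h-1}) = \optQ{h-1}(\pjp{h-1}, \AOH{h-1}, \dr{h-1})$, so there is nothing to prove.

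\textbf{Inductive step.} Assume the claim holds at stage $t+1$: $\optQ{t+1}(\stat{t+1}, \AOH{t+1}, \dr{t+1}) = \optQ{t+1}(\pjp{t+1}, \AOH{t+1}, \dr{t+1})$ for all $\AOH{t+1}, \dr{t+1}$. Two things must be checked. First, the rational continuations $\optdr{t+1}$ extracted via \eqref{eq:posg:decision-selection-value-ss1}--\eqref{eq:posg:decision-selection-value-ss2} must coincide with those from \eqref{eq:posg:decision-selection-value-pjp1}--\eqref{eq:posg:decision-selection-value-pjp2}: this follows because the maxmin/minmax operators are applied to $\optQ{t+1}(\stat{t+1}, \jointdr{t+1}) = \sum_{\AOH{t+1}} \stat{t+1}(\AOH{t+1}) \optQ{t+1}(\stat{t+1}, \AOH{t+1}, \dr{t+1})$ by \eqref{eq:posg:qvalue-ss}, and $\stat{t+1}(\AOH{t+1}) = \text{Pr}(\AOH{t+1} \mid b^0, \pjp{t+1})$ is exactly the weight used in \eqref{eq:posg:qvalue-pjp}, while the per-AOH terms agree by the induction hypothesis; hence the two optimization problems are identical and (selecting the same argmax/argmin) yield the same $\optdr{t+1}$. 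Second, plugging this common $\optdr{t+1}$ and the induction hypothesis into \eqref{eq:posg:value-AOH-ss} and comparing term-by-term with \eqref{eq:posg:value-AOH-pjp}: the immediate reward $R(\AOH{t}, \dr{t})$ is the same symbol in both; the transition weights $\text{Pr}(\AOH{t+1} \mid \AOH{t}, \dr{t})$ are identical since by \eqref{eq:posg:probability_AOH_update} they depend only on $\obs{t+1}, \act{t}, \AOH{t}$ and the model; and the recursive terms match because $\optQ{t+1}(\stat{t+1}, \AOH{t+1}, \optdr{t+1}) = \optQ{t+1}(\pjp{t+1}, \AOH{t+1}, \optdr{t+1})$ by the induction hypothesis. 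Summing over $\act{t}, \obs{t+1}$ then gives $\optQ{t}(\stat{t}, \AOH{t}, \dr{t}) = \optQ{t}(\pjp{t}, \AOH{t}, \dr{t})$, closing the induction.

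\textbf{Main obstacle.} The one genuinely delicate point is the first part of the inductive step---arguing that the \emph{same} rational joint decision rule $\optdr{t+1}$ is selected in both formulations. Since $\optQ{t+1}(\stat{t+1}, \jointdr{t+1})$ and $\optQ{t+1}(\pjp{t+1}, \jointdr{t+1})$ are equal as \emph{functions} of $\dr{t+1}$ (by the induction hypothesis combined with $\stat{t+1}$ being precisely the AOH-distribution induced by $\pjp{t+1}$), their sets of maxmin/minmax optimizers coincide, so we may consistently pick the same $\optdr{t+1}$; one should state this tie-breaking convention explicitly to avoid the appearance of a gap. A secondary, purely bookkeeping subtlety is verifying that the statistic update \eqref{eq:posg:statistic-update-rule} is consistent with Definition~\ref{def:plantimesufficientstatistic}, i.e. that if $\stat{t}(\AOH{t}) = \text{Pr}(\AOH{t} \mid b^0, \pjp{t})$ then $\stat{t+1}(\AOH{t+1})$ as computed by \eqref{eq:posg:statistic-update-rule} equals $\text{Pr}(\AOH{t+1} \mid b^0, \pjp{t+1})$ with $\pjp{t+1} = \langle \pjp{t}, \dr{t} \rangle$; this is immediate from \eqref{eq:posg:probability_AOH_update} and the chain rule, and I would dispatch it in one line before the induction.
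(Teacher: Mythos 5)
Your proposal is correct and follows essentially the same route as the paper's own proof: backward induction from $t=h-1$, substitution of the induction hypothesis into the recursive Q-value definition, and a separate verification that the rational continuation $\optdr{t+1}$ selected from $\optQ{t+1}(\stat{t+1},\cdot)$ coincides with the one selected from $\optQ{t+1}(\pjp{t+1},\cdot)$ because the two objectives agree (the statistic being exactly the AOH-weighting used in \eqref{eq:posg:qvalue-pjp}). Your explicit remarks on tie-breaking among optimizers and on the consistency of the update rule \eqref{eq:posg:statistic-update-rule} with Definition~\ref{def:plantimesufficientstatistic} are minor refinements the paper leaves implicit.
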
%

\begin{proof}
The proof is listed in Appendix \ref{appendix:sufficiency_statistic}.
\end{proof}

\noindent This allows us to define the value function of the zs-POSG
in terms of the sufficient statistic as follows:

\begin{IEEEeqnarray}{rCl}
\optV{t} (\stat{t}) \triangleq 
						\max\limits_{\dr[1]{t} \in \sdas{1}} 
                         \min\limits_{\dr[2]{t} \in \sdas{2}}
                           \optQ{t}(\stat{t}, \jointdr{t}).
\label{eq:posg:definition-value-function-ss}
\end{IEEEeqnarray}

\noindent
Although we have now identified the value at a single stage of the game, implementing a backwards inductive approach directly is still not possible, since the space of statistics is continuous and we do not know how to represent $\optV{t}(\stat{t})$. 
This paper takes a first step at resolving this problem by investigating the structure of $\optV{t}(\stat{t})$.

\subsection{Equivalence Final Stage Zero-Sum POSG and Family of Zero-Sum Bayesian Games}
\label{sec:theoryPOSG:equivalence}
We have already proven that the value function of a Family of zero-sum Bayesian Games exhibits concavity and convexity in terms of the marginal parts of the type distribution $\sigma$ for respectively agent 1 and 2.
We show that the final stage of the zs-POSG $\posgmodel$, $t=h-1$, can be defined as a Family of zs-BGs as follows:

\begin{itemize}
\item $I = \{1,2\}$ is the set of agents,
\item $\Theta = \AOHs{h-1}$ is the set of joint types corresponding to AOHs in zs-POSG $\posgmodel$ at stage $h-1$,
\item $\mathcal{A}$ is the set of joint actions in zs-POSG $\posgmodel$,
\item $R(\AOH{h-1},a)$ follows directly from the immediate reward function of
    $\posgmodel$ (as described in Section~\ref{sec:posg:past-joint-policies}).
\end{itemize}

\noindent 
By the result of Theorem \ref{the:fobg:concave_and_convex_value_function} 
(i.e., that $\optV{\mathcal{F}}$ exhibits concavity and convexity in marginal-spaces of agent 1 and 2 respectively)
the value function at the final stage of the zs-POSG, $\optV{h-1}$, is concave in $\statspace[1]{h-1}$ for all $\statcond[1]{h-1}$, and convex in $\statspace[2]{h-1}$ for all $\statcond[2]{h-1}$. 

Note that, even though the final stage is equal to a Family of Bayesian Games, our approach is substantially different from approaches that represent a POSG as a \emph{series} of BGs \cite{emery2005approximate} and derivative works \cite{oliehoek2008optimal}.
In fact, all other stages (0 to $h-2$) cannot be represented as a Family of BGs,
as the rational value function for stages $t=0,...,h-2$ cannot be expressed as a
function $R(\AOH{t},a)$.

\subsection{Concavity/Convexity of the Value Function}
\label{sec:concavity-convexity-of-the-value-function}
We continue to show that the value function for any stage exhibits the same type of structure. 
In particular, the plan-time sufficient statistic can be decomposed in marginals and conditionals,
and the value function is concave in marginal-space for agent 1, $\statspace[1]{t}$, and convex in marginal-space for agent 2, $\statspace[2]{t}$.
Figure \ref{fig:abstract_representation_conditionalslice} provides intuition on how the
best-response value functions relate to the concave/convex value function:
a `slice' in statistic-space corresponds to a single conditional $\statcond[1]{t}$
($\statcond[2]{t}$) and exhibits concave (convex) shape of the value function made up by linear segments that each correspond to a partial policy of the other agent.
As we will show, each segment corresponds exactly to a best-response value function.

\begin{figure}
    \begin{subfigure}[b]{0.235\textwidth}
    \tdplotsetmaincoords{78}{45}
        \begin{tikzpicture}[tdplot_main_coords]
            \def\widthx{2.65}
            \def\widthy{2.65}
            \def\heightz{1.9}
            
            \draw[]	node(stat) at (0.75 * \widthx, 0.25 * \widthy, 0){ \scriptsize $\stat{t}$-space};
            
            \draw[thick,] (0,0,0) -- (\widthx,0,0) 
            	node[anchor=north] at (0.5 * \widthx, 0, 0)
            			{ \scriptsize $\statcond[1]{t}$-space};
            \draw[thick,anchor=west] (0,0,0) -- (0, \widthy, 0)
            	node[] at (\widthx + 0.25, 0.5 * \widthy-.5, -.1)
            			{  \scriptsize $\statmarg[1]{t}$-space};
            	
            \draw[thick,] (0,\widthy,0) -- (\widthx, \widthy, 0);
            \draw[thick,] (\widthx,0,0) -- (\widthx, \widthy, 0);
            	
            \draw[thin,->] (0,      0,      0) -- (0,      0,      \heightz) node[anchor=south]{$\V{t}$};
            \draw[thin,->] (0,      \widthy,0) -- (0,      \widthy,\heightz) ;
            \draw[thin,->] (\widthx,\widthy,0) -- (\widthx,\widthy,\heightz) ;
            
            \def\sliceX{0.4 * \widthx}
            \draw[dotted, thick] (\sliceX, 0, \heightz) -- (\sliceX, 0, 0) -- (\sliceX, \widthy, 0) -- (\sliceX, \widthy, .9*\heightz);
            
            \draw[] node[rotate=0,
            				 scale=0.9] at 
            		(\sliceX, 1.05*\widthy, 0.995 * \heightz) 
            	{  \scriptsize  $\V[\text{BR1}]{t}(\stat{t}, \ppol[2]{t})$};

            \coordinate (f1-p1) at (\sliceX, 0,        0.2 *\heightz);
            \coordinate (f1-p2) at (\sliceX, \widthy,  0.9 *\heightz);
            \coordinate (f2-p1) at (\sliceX, 0,        0.35 *\heightz) ;
            \coordinate (f2-p2) at (\sliceX, \widthy,  0.55 *\heightz);
            \coordinate (f3-p1) at (\sliceX, 0,        0.55 *\heightz);
            \coordinate (f3-p2) at (\sliceX, \widthy,  0.3 *\heightz);
            \coordinate (f4-p1) at (\sliceX, 0,        1.0 *\heightz);
            \coordinate (f4-p2) at (\sliceX, \widthy,  0.05 *\heightz);
            
            \foreach \n in {1,...,4} {
            	\draw[blue] (f\n-p1) -- (f\n-p2);
            }
            
            \def\voffset{-0.05}
            \draw[red, thick] 
            	($(f1-p1) + (0,0,\voffset)$) -- 
            	($(intersection of {f1-p1}--{f1-p2} and {f2-p1}--{f2-p2}) + (0,0,\voffset)$) --
            	($(intersection of {f2-p1}--{f2-p2} and {f3-p1}--{f3-p2}) + (0,0,\voffset)$) --
            	($(intersection of {f3-p1}--{f3-p2} and {f4-p1}--{f4-p2}) + (0,0,\voffset)$) --
            	($(f4-p2) + (0,0,\voffset)$);
            
            \coordinate (s1) at (\sliceX, 0.1 * \widthy,   0*\heightz);
            \coordinate (s2) at (\sliceX, 0.34 * \widthy,   0 *\heightz);
            \coordinate (s3) at (\sliceX, 0.55 * \widthy,  0 *\heightz) ;
            \coordinate (s4) at (\sliceX, 0.75 * \widthy,  0 *\heightz);
            \coordinate (sz1) at (\sliceX, 0.1 * \widthy,  1 *\heightz);
            \coordinate (sz2) at (\sliceX, 0.34 * \widthy,, 1 *\heightz);
            \coordinate (sz3) at (\sliceX, 0.55 * \widthy, 1 *\heightz) ;
            \coordinate (sz4) at (\sliceX, 0.75 * \widthy, 1 *\heightz);
            
        \end{tikzpicture}
    \end{subfigure}%
    \begin{subfigure}[b]{0.5\textwidth}
    \tdplotsetmaincoords{78}{45}
        \begin{tikzpicture}[tdplot_main_coords]
            \def\widthx{2.65}
            \def\widthy{2.65}
            \def\heightz{1.9}
            
            \draw[]	node(stat) at (0.75 * \widthx, 0.25 * \widthy, 0){ \scriptsize $\stat{t}$-space};
            
            \draw[thick,] (0,0,0) -- (\widthx,0,0) 
            	node[anchor=north] at (0.5 * \widthx, -.2, .1)
            			{ \scriptsize $\statmarg[2]{t}$-space};
            \draw[thick,anchor=west] (0,0,0) -- (0, \widthy, 0)
            	node[]             at (     \widthx , 0.5 * \widthy, -0.2)
            			{ \scriptsize $\statcond[2]{t}$-space};
            	
            \draw[thick,] (0,\widthy,0) -- (\widthx, \widthy, 0);
            \draw[thick,] (\widthx,0,0) -- (\widthx, \widthy, 0);
            	
            \draw[thin,->] (0,      0,      0) -- (0,      0,      \heightz) node[anchor=south]{$\V{t}$};
            \draw[thin,->] (0,      \widthy,0) -- (0,      \widthy,\heightz) ;
            \draw[thin,->] (\widthx,\widthy,0) -- (\widthx,\widthy,\heightz) ;
            
            \def\sliceY{0.6 * \widthy}
            \draw[dotted, thick] (0, \sliceY, \heightz) -- (0, \sliceY, 0) -- (\widthx, \sliceY, 0) -- (\widthx, \sliceY, .8*\heightz);
            
            \draw[] node[rotate=0, 
                         scale=0.9] at 
            	(0.925 * \widthx, \sliceY, 0.9 * \heightz) 
            	{  \scriptsize  $\V[\text{BR2}]{t}(\stat{t}, \ppol[1]{t})$};

            \coordinate (f1-p1) at (0,            \sliceY,  0.7 *\heightz);
            \coordinate (f1-p2) at (0.4*\widthx,  \sliceY,  0.0 *\heightz);
            \coordinate (f2-p1) at (0,            \sliceY,  0.4 *\heightz) ;
            \coordinate (f2-p2) at (\widthx,      \sliceY,  0.1 *\heightz);
            \coordinate (f3-p1) at (0,            \sliceY,  0.1 *\heightz);
            \coordinate (f3-p2) at (\widthx,      \sliceY,  0.4 *\heightz);
            \coordinate (f4-p1) at (0.6*\widthx,  \sliceY,  0.0 *\heightz);
            \coordinate (f4-p2) at (\widthx,      \sliceY,  0.8 *\heightz);
            
            \foreach \n in {1,...,4} {
            	\draw[blue] (f\n-p1) -- (f\n-p2);
            }
            
            \def\voffset{0.05}
            \draw[red, thick] 
            	($(f1-p1) + (0,0,\voffset)$) -- 
            	($(intersection of {f1-p1}--{f1-p2} and {f2-p1}--{f2-p2}) + (0,0,\voffset)$) --
            	($(intersection of {f2-p1}--{f2-p2} and {f3-p1}--{f3-p2}) + (0,0,\voffset)$) --
            	($(intersection of {f3-p1}--{f3-p2} and {f4-p1}--{f4-p2}) + (0,0,\voffset)$) --
            	($(f4-p2) + (0,0,\voffset)$);
            
            \coordinate (s1) at (0.1 * \widthx,   \sliceY, 0*\heightz);
            \coordinate (s2) at (0.3 * \widthx,   \sliceY, 0 *\heightz);
            \coordinate (s3) at (0.55 * \widthx,  \sliceY, 0 *\heightz) ;
            \coordinate (s4) at (0.85 * \widthx,  \sliceY, 0 *\heightz);
            \coordinate (sz1) at (0.1 * \widthx,  \sliceY, 1 *\heightz);
            \coordinate (sz2) at (0.3 * \widthx,  \sliceY, 1 *\heightz);
            \coordinate (sz3) at (0.55 * \widthx, \sliceY, 1 *\heightz) ;
            \coordinate (sz4) at (0.85 * \widthx,  \sliceY, 1 *\heightz);
            
        \end{tikzpicture}
    \end{subfigure}
	\caption{
		\footnotesize An abstract visualization of the decomposition of statistic-space into marginal-space and conditional-space. 
	}
\label{fig:abstract_representation_conditionalslice}
\end{figure}
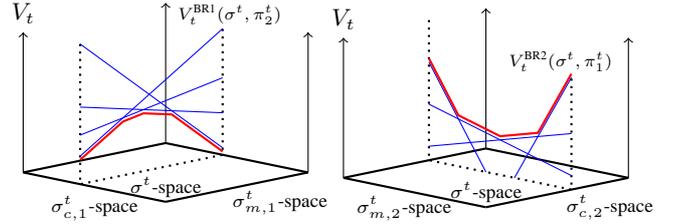
Best-response value functions in terms of $\stat{t}$ and $\ppol[i]{t}$ are defined as $\V[\text{BR1}]{t}$ and
$\V[\text{BR2}]{t}$, similar to \eqref{eq:fobg:bestresponsedef:max} and \eqref{eq:fobg:bestresponsedef:min}.
Let $\spas[i]{t}$ be the space of all stochastic partial policies $\ppol[i]{t}$.
We then have:

\begin{IEEEeqnarray}{rCl}
\optV{t}(\stat{t}) 
	= \min\limits_{\ppol[2]{t} \in \spas[2]{t}} \V[\text{BR1}]{t}(\stat{t}, \ppol[2]{t}) 
	= \max\limits_{\ppol[1]{t} \in \spas[1]{t}} \V[\text{BR2}]{t}(\stat{t}, \ppol[1]{t}).
	\hspace{7mm}
\label{eq:posg:bestresponsedef:derivation}
\end{IEEEeqnarray}

We first show that the best-response value functions $\V[\text{BR}1]{t}$ and $\V[\text{BR}2]{t}$ are linear in their respective marginal-spaces.
Let us define a vector that contains the value (immediate reward \emph{and} future value) for agent
 1 for each individual AOH $\AOH[1]{t}$, given that agent 2 follows the partial policy $\ppol[2]{t}$: 
  
\begin{IEEEeqnarray}{rCl}
	\IEEEeqnarraymulticol{3}{l}{
\vec{\nu}_{[ \statcond[1]{t}, \ppol[2]{t} ]}(\AOH[1]{t})	
	\triangleq 
		\max\limits_{\act[1]{t} \in \mathcal{A}_1} 
\hspace{-0.5mm}
		\bigg[
 			\sum_{\AOH[2]{t} \in \AOHs[2]{t}} \hspace{-1mm}
				\statcond[1]{t}(\AOH[2]{t}|\AOH[1]{t})
			\sum_{\act[2]{t} \in \mathcal{A}_2} \hspace{-1mm}
					\dr[2]{t}(\act[2]{t}|\AOH[2]{t})
} \nonumber  \\ 
	&&	  
	 \bigg( 
	 	  \hspace{-0.7mm}
	 			R(\AOH{t},\act{t})
		 +  \hspace{-1mm}
		\sum\limits_{\obs{t+1} \in \mathcal{O}_1} 
		\hspace{-1mm}
		 	\text{Pr}(\obs{t+1} | \AOH{t}, \act{t})
		 		\vec{\nu}_{[ \statcond[1]{t+1}, \ppol[2]{t+1} ]}
		 			(\AOH[1]{t+1})
		 	\bigg) \bigg]  \hspace{0.7cm}
\label{eq:concaveconvex:valuevector}
\end{IEEEeqnarray}
Note that this is a recursive definition, and that $\AOH[1]{t+1} = \langle \AOH[1]{t}, \act[1]{t}, \obs[1]{t+1} \rangle$.
The vector $\vec{\nu}_{[ \statcond[2]{t}, \ppol[1]{t} ]}$ is defined analogously.

\begin{lemma}
(1) $\V[\text{BR1}]{t}$ is linear in $\statspace[1]{t}$ for a given $\statcond[1]{t}$ and $\ppol[2]{t}$, and (2) $\V[\text{BR2}]{t}$ is linear in $\statspace[2]{t}$ for a given $\statcond[2]{t}$ and $\ppol[1]{t}$, for all stages $t = 0,\ldots, h-1$:
\begin{enumerate}
\item \inlineequation[eq:lemma:VBR_linear_in_statspace:result1]{	
			\V[\text{BR1}]{t}(\statmarg[1]{t} \statcond[1]{t}, \ppol[2]{t}) 
			= \vstatmarg[1]{t} \cdot \vec{\nu}_{[\statcond[1]{t}, \ppol[2]{t} ]}, \hfill
	}
\item \inlineequation[eq:lemma:VBR_linear_in_statspace:result2]{	
		\V[\text{BR2}]{t}(\statmarg[2]{t}\statcond[2]{t}, \ppol[1]{t}) 
		= \vstatmarg[2]{t} \cdot \vec{\nu}_{[\statcond[2]{t}, \ppol[1]{t} ]}. \hfill
	}
\end{enumerate}
\label{lemma:VBR_linear_in_statspace}
\end{lemma}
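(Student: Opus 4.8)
The plan is to prove both items by backward induction on $t$, which is the natural route since the best-response value functions and the vectors $\vec{\nu}_{[\statcond[1]{t}, \ppol[2]{t}]}$, $\vec{\nu}_{[\statcond[2]{t}, \ppol[1]{t}]}$ are themselves defined recursively through stage $t+1$. I will argue item~(1) only; item~(2) follows by exchanging the two agents and replacing $\max$ with $\min$. For the base case $t = h-1$, Section~\ref{sec:theoryPOSG:equivalence} identifies the last stage of $\posgmodel$ with a Family of zs-BGs whose joint types are $\AOHs{h-1}$, so $\V[\text{BR1}]{h-1}$ coincides with that family's best-response value function; moreover the observation-weighted continuation term in \eqref{eq:concaveconvex:valuevector} vanishes at $t=h-1$ (there is no stage $h$), so $\vec{\nu}_{[\statcond[1]{h-1}, \ppol[2]{h-1}]}$ collapses to the vector $\vec{r}_{[\statcond[1]{h-1}, \dr[2]{h-1}]}$ of \eqref{eq:fobg:rvector}. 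Hence the base case is exactly item~(1) of Lemma~\ref{lemma:VBR_linear_in_statspace_FOBG}.

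For the inductive step, assume item~(1) holds at stage $t+1$. Starting from $\V[\text{BR1}]{t}(\stat{t}, \ppol[2]{t}) = \max_{\ppol[1]{t}} \Q{t}(\stat{t}, \langle \ppol[1]{t}, \ppol[2]{t} \rangle)$, I would write $\ppol[1]{t} = \langle \dr[1]{t}, \ppol[1]{t+1} \rangle$ and unroll one step of the $Q$-value recursion (the analogue of \eqref{eq:posg:value-AOH-ss}--\eqref{eq:posg:qvalue-ss} in which agent~2 follows the fixed $\ppol[2]{t}$ and agent~1 follows $\langle \dr[1]{t}, \ppol[1]{t+1} \rangle$). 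Combining the statistic update \eqref{eq:posg:statistic-update-rule} with \eqref{eq:posg:probability_AOH_update}, one has $\stat{t+1}(\AOH{t+1}) = \text{Pr}(\AOH{t+1}|\AOH{t}, \jointdr{t})\, \stat{t}(\AOH{t})$, which turns the inner double sum into $\sum_{\AOH{t+1}} \stat{t+1}(\AOH{t+1})(\cdots)$ and yields $\Q{t}(\stat{t}, \langle \ppol[1]{t}, \ppol[2]{t} \rangle) = \sum_{\AOH{t}} \stat{t}(\AOH{t})\, R(\AOH{t}, \jointdr{t}) + \Q{t+1}(\stat{t+1}, \langle \ppol[1]{t+1}, \ppol[2]{t+1} \rangle)$, with $\stat{t+1}$ obtained from $\stat{t}$ via $\jointdr{t}$. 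Only the second summand depends on $\ppol[1]{t+1}$, so maximizing over it gives $\V[\text{BR1}]{t+1}(\stat{t+1}, \ppol[2]{t+1})$, to which I apply the inductive hypothesis, obtaining $\vstatmarg[1]{t+1} \cdot \vec{\nu}_{[\statcond[1]{t+1}, \ppol[2]{t+1}]}$. It then remains to insert the decompositions $\stat{t}(\AOH{t}) = \statmarg[1]{t}(\AOH[1]{t})\, \statcond[1]{t}(\AOH[2]{t}|\AOH[1]{t})$ and $\dr{t}(\act{t}|\AOH{t}) = \dr[1]{t}(\act[1]{t}|\AOH[1]{t})\, \dr[2]{t}(\act[2]{t}|\AOH[2]{t})$ together with the corresponding factorization of $\stat{t+1}$, regroup so that $\sum_{\AOH[1]{t}} \statmarg[1]{t}(\AOH[1]{t})$ is the outermost sum, and finally carry out the maximization over $\dr[1]{t}$.

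The step I expect to be the main obstacle is this last maximization, i.e.\ interchanging $\max$ over agent~1's policy with the expectation over agent~1's private history $\AOH[1]{t}$; I would isolate it as a small claim. Writing $\stat{t+1} = \statmarg[1]{t+1}\statcond[1]{t+1}$, the conditional part $\statcond[1]{t+1}(\AOH[2]{t+1}|\AOH[1]{t+1})$ does \emph{not} depend on $\dr[1]{t}$: the factor $\dr[1]{t}(\act[1]{t}|\AOH[1]{t})$ cancels between numerator $\stat{t+1}(\AOH{t+1})$ and denominator $\statmarg[1]{t+1}(\AOH[1]{t+1})$, so $\statcond[1]{t+1}$ is pinned down by $\statcond[1]{t}$ and $\dr[2]{t}$ alone, whereas $\statmarg[1]{t+1}(\AOH[1]{t+1})$ is proportional to $\dr[1]{t}(\act[1]{t}|\AOH[1]{t})\, \statmarg[1]{t}(\AOH[1]{t})$. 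Consequently, once $\statcond[1]{t}$ and $\ppol[2]{t}$ are fixed, agent~1's choices at distinct histories $\AOH[1]{t}$ are uncoupled --- its action distribution at $\AOH[1]{t}$ and its continuation policy thereafter influence only the summand indexed by that $\AOH[1]{t}$ --- so $\max_{\ppol[1]{t}}$ commutes with the nonnegative weights $\statmarg[1]{t}(\AOH[1]{t})$ and with $\sum_{\act[1]{t}} \dr[1]{t}(\act[1]{t}|\AOH[1]{t})$, leaving a pointwise $\max_{\act[1]{t}}$. Matching the resulting per-history bracket term by term against \eqref{eq:concaveconvex:valuevector} identifies it with $\vec{\nu}_{[\statcond[1]{t}, \ppol[2]{t}]}(\AOH[1]{t})$ --- the immediate reward averaged against $\statcond[1]{t}(\cdot|\AOH[1]{t})$ and $\dr[2]{t}$, plus the $\obs[1]{t+1}$-weighted continuation supplied by the inductive hypothesis. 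Collecting terms gives $\V[\text{BR1}]{t}(\statmarg[1]{t}\statcond[1]{t}, \ppol[2]{t}) = \sum_{\AOH[1]{t}} \statmarg[1]{t}(\AOH[1]{t})\, \vec{\nu}_{[\statcond[1]{t}, \ppol[2]{t}]}(\AOH[1]{t}) = \vstatmarg[1]{t} \cdot \vec{\nu}_{[\statcond[1]{t}, \ppol[2]{t}]}$, which is linear in $\statspace[1]{t}$ for the given $\statcond[1]{t}$ and $\ppol[2]{t}$, closing the induction; the argument for item~(2) is entirely symmetric.
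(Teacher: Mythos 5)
Your proposal is correct and follows essentially the same route as the paper's own proof: backward induction anchored at $t=h-1$ via the Family-of-zs-BGs equivalence and Lemma~\ref{lemma:VBR_linear_in_statspace_FOBG}, splitting $\max_{\ppol[1]{t}}$ into $\max_{\dr[1]{t}}\max_{\ppol[1]{t+1}}$, invoking the inductive hypothesis on the inner best response, and then reducing the outer maximization to a pointwise $\max_{\act[1]{t}}$ because $\statcond[1]{t+1}$ is independent of $\dr[1]{t}$. Your explicit cancellation argument for that independence and the decoupling across histories $\AOH[1]{t}$ is the same key observation the paper makes, just spelled out in slightly more detail.
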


\begin{proof}
We prove this by induction.
We know the zs-POSG value function at stage $t=h-1$ to be equivalent to that of a Family of zs-BGs, which is concave/convex (Lemma \ref{lemma:VBR_linear_in_statspace_FOBG}).
This is a base case for the proof.
The full proof is listed in Appendix \ref{appendix:vbr_linear_in_statspace}.
\end{proof}

\begin{customtheorem}
$\optV{t}$ is (1) concave in $\statspace[1]{t}$ for a given $\statcond[1]{t}$, and (2) convex in $\statspace[2]{t}$ for a given $\statcond[2]{t}$. 
More specifically, $\optV[]{t}$ is respectively a minimization over linear functions in $\statspace[1]{t}$ and a maximization over linear functions in  $\statspace[2]{t}$:
\begin{enumerate}
\item $	\optV{t}(\statmarg[1]{t} \statcond[1]{t}) 
		= \min\limits_{\ppol[2]{t} \in \spas[2]{t}} 
			\bigg[	
				\vstatmarg[1]{t} \cdot \vec{\nu}_{[\statcond[1]{t}, \ppol[2]{t} ]}
			\bigg]$,
\item $	\optV{t}(\statmarg[2]{t} \statcond[2]{t}) 
		= \max\limits_{\ppol[1]{t} \in \spas[1]{t}} 
			\bigg[ 
				\vstatmarg[2]{t} \cdot \vec{\nu}_{[\statcond[2]{t}, \ppol[1]{t} ]}
			\bigg]$.
\end{enumerate}
\label{the:concave_and_convex_value_function} 	
\end{customtheorem}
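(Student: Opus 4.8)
The plan is to derive Theorem~\ref{the:concave_and_convex_value_function} from Lemma~\ref{lemma:VBR_linear_in_statspace} in exactly the same way Theorem~\ref{the:fobg:concave_and_convex_value_function} was derived from Lemma~\ref{lemma:VBR_linear_in_statspace_FOBG}. First I would recall the min-max characterization of the stagewise value function in statistic space, equation~\eqref{eq:posg:bestresponsedef:derivation}, namely $\optV{t}(\stat{t}) = \min_{\ppol[2]{t} \in \spas[2]{t}} \V[\text{BR1}]{t}(\stat{t}, \ppol[2]{t})$. Then, writing $\stat{t} = \statmarg[1]{t}\statcond[1]{t}$ and substituting the linearity result \eqref{eq:lemma:VBR_linear_in_statspace:result1} from Lemma~\ref{lemma:VBR_linear_in_statspace}, each inner term $\V[\text{BR1}]{t}(\statmarg[1]{t}\statcond[1]{t}, \ppol[2]{t})$ becomes the inner product $\vstatmarg[1]{t}\cdot\vec{\nu}_{[\statcond[1]{t}, \ppol[2]{t}]}$, which is linear in $\statmarg[1]{t}$ for each fixed $\statcond[1]{t}$ and each fixed $\ppol[2]{t}$. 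Taking the pointwise minimum over $\ppol[2]{t} \in \spas[2]{t}$ of a collection of linear functions yields a concave function, giving item~(1). Item~(2) is the mirror image: using the other half of \eqref{eq:posg:bestresponsedef:derivation}, $\optV{t}(\stat{t}) = \max_{\ppol[1]{t} \in \spas[1]{t}} \V[\text{BR2}]{t}(\stat{t}, \ppol[1]{t})$, and substituting \eqref{eq:lemma:VBR_linear_in_statspace:result2}, so that a pointwise maximum of linear functions in $\statmarg[2]{t}$ is convex.

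\begin{proof}
Fix a stage $t \in \{0, \ldots, h-1\}$ and a conditional $\statcond[1]{t}$. By \eqref{eq:posg:bestresponsedef:derivation} and the definition of $\V[\text{BR1}]{t}$,
\begin{IEEEeqnarray*}{rCl}
\optV{t}(\statmarg[1]{t} \statcond[1]{t})
	&\numeq{\ref{eq:posg:bestresponsedef:derivation}}&
		\min\limits_{\ppol[2]{t} \in \spas[2]{t}}
			\V[\text{BR1}]{t}(\statmarg[1]{t} \statcond[1]{t}, \ppol[2]{t}) \\
	&\numeq{\ref{eq:lemma:VBR_linear_in_statspace:result1}}&
		\min\limits_{\ppol[2]{t} \in \spas[2]{t}}
			\bigg[
				\vstatmarg[1]{t} \cdot \vec{\nu}_{[\statcond[1]{t}, \ppol[2]{t} ]}
			\bigg],
\end{IEEEeqnarray*}
where the second step applies Lemma~\ref{lemma:VBR_linear_in_statspace}. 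For each fixed $\ppol[2]{t}$ and $\statcond[1]{t}$, the map $\statmarg[1]{t} \mapsto \vstatmarg[1]{t} \cdot \vec{\nu}_{[\statcond[1]{t}, \ppol[2]{t} ]}$ is linear in $\statspace[1]{t}$, and a pointwise minimum over a family of linear functions is concave; this proves item~1. The proof of item~2 is analogous, using the second equality in \eqref{eq:posg:bestresponsedef:derivation}, the definition of $\V[\text{BR2}]{t}$, and result~\eqref{eq:lemma:VBR_linear_in_statspace:result2} of Lemma~\ref{lemma:VBR_linear_in_statspace}: $\optV{t}(\statmarg[2]{t} \statcond[2]{t}) = \max_{\ppol[1]{t} \in \spas[1]{t}} [ \vstatmarg[2]{t} \cdot \vec{\nu}_{[\statcond[2]{t}, \ppol[1]{t} ]} ]$, a pointwise maximum of linear functions, hence convex.
\end{proof}

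The content of this theorem is entirely carried by Lemma~\ref{lemma:VBR_linear_in_statspace} and by \eqref{eq:posg:bestresponsedef:derivation}, so the only genuine obstacle lies in those two ingredients rather than in the present argument. The subtle point in \eqref{eq:posg:bestresponsedef:derivation} is that the maxmin over per-stage decision rules $\dr{t}$ composed with rational future play can be re-expressed as a min (respectively max) over \emph{complete} partial policies $\ppol[2]{t}$ ($\ppol[1]{t}$) of a best-response value; this relies on the sufficiency of the statistic (Lemma~\ref{lemma:sufficiency_statistic}) together with a min-max interchange that is justified by the concavity/convexity in decision-rule space inherited from the Bayesian-game structure. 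The subtle point in Lemma~\ref{lemma:VBR_linear_in_statspace} is the induction on $t$: one must check that the recursively defined vector $\vec{\nu}_{[\statcond[1]{t}, \ppol[2]{t}]}$ in \eqref{eq:concaveconvex:valuevector} indeed makes $\V[\text{BR1}]{t}$ linear in $\statmarg[1]{t}$, using the statistic update rule \eqref{eq:posg:statistic-update-rule} to relate the marginal at stage $t+1$ to that at stage $t$ and folding the best-response maximization over $\act[1]{t}$ through the sum over next-stage AOHs — with the base case $t = h-1$ supplied by the Family-of-zs-BGs equivalence of Section~\ref{sec:theoryPOSG:equivalence} and Lemma~\ref{lemma:VBR_linear_in_statspace_FOBG}. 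Given those results, as assumed, the theorem itself is a two-line corollary.
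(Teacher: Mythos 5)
Your proposal is correct and follows essentially the same route as the paper: invoke \eqref{eq:posg:bestresponsedef:derivation} to express $\optV{t}$ as a minimization (resp.\ maximization) of the best-response value functions over partial policies, substitute the linearity result of Lemma~\ref{lemma:VBR_linear_in_statspace}, and conclude concavity (convexity) as a pointwise min (max) of linear functions. Your closing remarks correctly identify that all the real work lives in Lemma~\ref{lemma:VBR_linear_in_statspace} and in \eqref{eq:posg:bestresponsedef:derivation}, which is exactly how the paper structures the argument.
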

\begin{proof}
Filling in the result of Lemma \ref{lemma:VBR_linear_in_statspace} gives:

\begin{IEEEeqnarray*}{rCl}
\optV{t}(\statmarg[1]{t} \statcond[1]{t})
	&\numeq{\ref{eq:posg:bestresponsedef:derivation}}&
		 \min\limits_{\ppol[2]{t} \in \spas[2]{t}} \V[\text{BR1}]{t}(\statmarg[1]{t}\statcond[1]{t}, \ppol[2]{t})  \\ 
	&\numeq{\ref{eq:lemma:VBR_linear_in_statspace:result1}}& 	
		\min\limits_{\ppol[2]{t} \in \spas[2]{t}} 			
			\bigg[	
				\vstatmarg[1]{t} \cdot \vec{\nu}_{[\statcond[1]{t}, \ppol[2]{t} ]}
			\bigg].
\end{IEEEeqnarray*}

\noindent
The proof for item 2 is analogous to that of item 1. 
\end{proof}

The importance of this theorem is that it suggests ways to (approximately) represent $\optV{t}(\stat{t})$. 
Thus, it may enable the development of new solution methods for zs-POSGs. 
To draw the parallel, many POMDP solution methods exploit the fact that a POMDP value function is piecewise-linear and convex (PWLC) in belief-space~\cite{pineau2006anytime,spaan2005perseus} (which is similar to the statistic-space 
we consider),
and recently such results have been extended to the decentralized cooperative (i.e., Dec-POMDP) case \cite{Dibangoye13IJCAI,MacDermed13NIPS26}.

Note that our result is similar to, but different from the saddle-point
function that is typically associated with min-max equilibria. 
In particular,
the value function in is defined in the space of plan-time sufficient
statistics, while the well-known saddle point function
(see Section \ref{sec:background:nfgs}) is a function of complete
strategies. 
This means that the latter is only defined for the reduction to
a normal form game, which per definition destroys any of the specific structure
of the game under concern. 
In contrast, our formulation preserves such structure and thus allows us to make statements about how \emph{value generalizes as a function of the information distribution}.
That is, the formulation allows us to give approximation bounds that generalize within each conditional statistic.

\section{Reduction to NOSG}
\label{sec:posg:reductiontoNOSG}
The application of methods that exploit the PWLC structure of the value function of Dec-POMDPs was enabled by 
a reduction from Dec-POMDP to a 
\emph{non-observable MDP (NOMDP)},
which is a special type of (centralized) POMDP \cite{Dibangoye13IJCAI,MacDermed13NIPS26,Nayyar13TAC,Oliehoek14IASTR}.
This allows POMDP solution methods to be employed in the context of Dec-POMDPs. The proposed plan-time statistics for Dec-POMDPs \cite{Oliehoek13IJCAI} precisely correspond to the belief in the centralized model.

Since we have shown that it is possible to generalize the plan-time statistics to the zs-POSG case, it is reasonable to expect that zs-POSGs can be reduced similarly.
Here we present a reduction from zs-POSG to a special type of stochastic game where information is centralized, to
which we refer as a \emph{Non-Observable Stochastic Game} (NOSG). 
We do not provide the full background of the reduction for the Dec-POMDP case, but refer to \cite{Oliehoek14IASTR}.
The difference between the reduction from Dec-POMDP to NOMDP and the one we present next, is that the zs-POSG is reduced to a stochastic game where the joint AOH acts as the state.

\begin{definition}
A \textbf{plan-time Non-Observable Stochastic Game} for a zs-POSG is a tuple $\langle \dot{S}, \dot{\mathcal{A}}_1, \dot{\mathcal{A}}_1, \dot{\mathcal{O}}, \dot{T}, \dot{O}, \dot{R}, \dot{b}^0 \rangle$:
\begin{itemize}
\item $I = \{1, 2\}$ is the set of agents,
\item $\dot{S}$ is the set of augmented states $\dot{s}^t$, each corresponding to a joint AOH $\AOH{t}$,
\item $\dot{\mathcal{A}} = \dot{\mathcal{A}}_1 \times \dot{\mathcal{A}}_2$ is the continuous action-space, containing stochastic decision rules $\dr{t} = \langle \dr[1]{t}, \dr[2]{t} \rangle$,
\item $\mathcal{\dot{O}} = \{\text{NULL}\}$ is set of joint observations that only contains the NULL observation.
\item $\dot{T}$ is the transition function that specifies 
		$ \dot{T}(\dot{s}^{t+1} \,|\, \dot{s}^t, \langle \dot{a}_1^t, \dot{a}_2^t \rangle )  = 
			\text{Pr}(\AOH{t+1} \,|\, \AOH{t}, \langle \dr[1]{t}, \dr[2]{t} \rangle). $
\item $\dot{O}$ is the observation function that specifies that observation 
		NULL is received with probability 1.
\item $ \dot{\mathcal{R}} : \dot{S} \times \mathcal{A} \rightarrow \mathbb{R}$ is the reward function $R(\AOH{t}, \langle \dr[1]{t}, \dr[2]{t} \rangle) $,
\item $\dot{b}^0 \in \Delta(\dot{S})$ is the initial belief over states.
\end{itemize}
\end{definition}
\vspace{.5mm}

In the NOSG model, agents condition their choices on the joint belief over
augmented states $\dot{b} \in \Delta(\dot{S})$, which corresponds to the belief
over joint AOHs captured in the statistic $\stat{t} \in \Delta(\Theta^t)$.  As
such, a value function formulation for the NOSG can be given in accordance with
\eqref{eq:posg:definition-value-function-ss}.

In order to avoid potential confusion, let us point out that a zs-POSG can also 
be converted to a best response POMDP by fixing the policies of one agent
\cite{nair2003taming}, which leads to a model where the information state
$b(s,\theta_j)$ is a distribution over states and AOHs of the other agent.  In
contrast, our NOSG formulation maintains a belief over only joint AOHs. More
importantly, however, our approach does not require fixing the policy of any
agent, which would necessitate recomputing of all values when the fixed policy
changes (if the past policy changes, the distributions over paths change, and
thus future values are affected).  
Where the approach of Nair et al.~\cite{nair2003taming} leads to a single-agent
model that can be used to compute a best-response, our conversion leads to a
multi-agent model that can be used to compute a Nash equilibrium
directly.  

A key contribution of our NOSG formulation is that it directly indicates that
properties of `zero-sum stochastic games with shared observations'
\cite{ghosh2004zero} also hold for zs-POSGs.

\begin{definition} \label{def:zsSOSG}
    A \textbf{zero-sum Shared Observation Stochastic Game (zs-SOSG)}
    is a zs-POSG (cf. Def.~\ref{def:POSG}) in which each agent receives the
    joint observation and can observe the other's actions.
\end{definition}

\noindent
Ghosh et al. \cite{ghosh2004zero} show that, under some technical assumptions, a
zs-SOSG can be converted into a completely observable model (similar to the
conversion of a POMDP into a belief MDP), and that, in the infinite-horizon
case, both the value and a rational joint policy exists.

Our claim is that these results in fact transfer to the more general class
of zs-POSGs via our NOSG construction. We start by noting:

\begin{lemma}
The plan-time NOSG of a finite-horizon zs-POSG is a zs-SOSG.
\end{lemma}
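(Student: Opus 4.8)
The plan is to verify directly that the plan-time NOSG satisfies the two defining clauses of a zs-SOSG (Definition~\ref{def:zsSOSG}): that it is a zero-sum POSG in which (i) every agent receives the full joint observation, and (ii) every agent can observe the other agent's action.

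First I would check that the plan-time NOSG is an instance of the (zero-sum) POSG template of Definition~\ref{def:POSG}. It inherits the horizon $h$ and agent set $I=\{1,2\}$ of the underlying zs-POSG; its state set $\dot S$ is the set of joint AOHs $\AOH{t}$, which is finite because $h$ and all primitive sets are finite; $\dot T$ and $\dot O$ are genuine probability kernels and $\dot R$ a real-valued map, directly by the construction; and since $\dot R$ is specified for agent~1 only (with agent~2 receiving $-\dot R$) the game is zero-sum. The one formal discrepancy is that the action sets $\dot{\mathcal A}_i$ are the \emph{continuous} spaces of stochastic decision rules rather than finite sets; I would flag that this is precisely the (necessary) relaxation already made when passing from finite zs-NFGs/zs-BGs to their mixed-strategy counterparts in Section~\ref{sec:background}, and is the setting in which Ghosh et al.~\cite{ghosh2004zero} work as well, so ``zs-POSG'' here is read in this mildly generalized sense.

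Clause~(i) is then immediate: the joint-observation set is $\dot{\mathcal O}=\{\mathrm{NULL}\}$ and $\dot O$ returns NULL with probability~$1$, so the joint observation at every stage is NULL; since each agent's individual observation set is likewise $\{\mathrm{NULL}\}$, each agent trivially receives the whole joint observation.

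The step I expect to be the main obstacle is clause~(ii), observability of the other agent's action, since in a plain zs-POSG agents do \emph{not} see each other's actions (their AOHs contain only their own). Here the point is that an ``action'' in the NOSG is a decision rule $\dr[i]{t}$, and the NOSG is built precisely so that play proceeds on the shared belief $\dot b^{t}=\stat{t}$ over augmented states; by the statistic-update rule~\eqref{eq:posg:statistic-update-rule}, $\dot b^{t+1}$ is a \emph{deterministic} function of $\dot b^{t}$ and the \emph{joint} decision rule $\jointdr{t}$. Consequently both agents can maintain the common belief $\dot b^{t}$ that underlies the value function~\eqref{eq:posg:definition-value-function-ss} only if each of them observes the other's decision rule, i.e.\ the joint action is necessarily common knowledge at the end of each stage. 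Making this implicit assumption explicit---taking the NOSG to be the zs-POSG whose (otherwise uninformative) stage-$t{+}1$ observation also reveals $\jointdr{t}$, a change that alters neither $\dot T$ nor $\dot R$---yields a model that literally meets both clauses of Definition~\ref{def:zsSOSG}. I would conclude by noting that this is exactly what the ``plan-time'' qualifier buys us: when the game is being solved, the joint decision-rule history (equivalently $\stat{t}$) is on hand, playing precisely the role of a shared observation, so that the reduction of a zs-SOSG to a completely observable model carries over and reproduces the value-function formulation~\eqref{eq:posg:definition-value-function-ss}.
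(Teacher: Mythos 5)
Your proposal is correct and follows the same overall strategy as the paper: check the clauses of Definition~\ref{def:zsSOSG} one by one, note that the state set is discrete in the finite-horizon case, that the action space is continuous but benign (the paper adds that it is finite-dimensional, closed and bounded, which matters later for Ghosh et al.'s compactness requirements), and that the NULL observation is trivially shared. The one place where you genuinely diverge is the justification of the action-observability clause, which you rightly identify as the crux. The paper disposes of it in a single phrase: ``by assuming rationality for both players, we can assume that they observe each others actions'' --- i.e., equilibrium decision rules are common knowledge, so each agent can deduce the other's decision rule without literally receiving it. You instead argue from the mechanics of the construction: the common belief $\dot b^t$ can only be maintained via the update rule \eqref{eq:posg:statistic-update-rule} if the joint decision rule is known to both agents, and you propose to make this explicit by folding $\jointdr{t}$ into the stage-$(t{+}1)$ observation, which changes neither $\dot T$ nor $\dot R$. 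Your version is the more careful one --- it produces a model that literally satisfies the definition rather than one that satisfies it ``under rationality'' --- and it makes transparent why the plan-time construction is the right object; the paper's version is shorter and emphasizes that no modification of the model is needed once equilibrium play is assumed. Both arguments are sound and lead to the same conclusion.
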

\begin{proof} 
    In the finite-horizon case, the set of states in our plan-time
    NOSG is discrete. The action space, while continuous, is of finite
    dimensionality, and is a closed and bounded set. The set of shared
    observations only consists of a trivial NULL observation and by assuming
    rationality for both players, we can assume that they observe each others
    actions (corresponding to decision rules of the POSG).
\end{proof}

In the infinite-horizon case, some assumptions on the class of decision rules
$\dr[i]{t}$ are needed to be able to formulate the plan-time NOSG model
and assuming infinite policy trees as the policies would violate certain
technical requirements on which Ghosh's results depend (e.g., the action-spaces are
required to be metric and compact spaces). 
However, a straightforward extension of our reduction for the case where the
agents use finite-state controllers (analogous to such formulations for
Dec-POMDPs \cite{MacDermed13NIPS26}), would satisfy these requirements, and as
such we can infer the existence of a value for such games:
\begin{corollary}
    For infinite-horizon zs-POSGs where agents are restricted to use
    finite-state controllers for their policies, the value of the game exists.
\end{corollary}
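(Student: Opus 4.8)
The plan is to reduce the infinite-horizon finite-state-controller case to the setting of Ghosh et al.~\cite{ghosh2004zero}, exactly mirroring the finite-horizon lemma just proven. First I would make the finite-state-controller (FSC) restriction precise: fix, for each agent $i$, a finite set $N_i$ of internal controller nodes, so that a policy is a pair of maps (an action-selection map $N_i \to \Delta(\mathcal{A}_i)$ and a node-transition map $N_i \times \mathcal{A}_i \times \mathcal{O}_i \to \Delta(N_i)$, together with an initial node distribution). The crucial observation is that once $N_i$ is fixed, the parameter describing agent $i$'s ``decision rule at a node'' lives in a \emph{fixed, finite-dimensional} space --- a product of simplices --- which is compact, convex, and in particular a compact metric space. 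This is the analogue of the statement in the previous lemma's proof that the action space, though continuous, has finite dimensionality and is closed and bounded.

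Next I would construct the plan-time NOSG for this FSC-restricted game. The augmented state should now be a joint controller-node configuration together with whatever finite information is needed to evaluate rewards and transitions --- concretely, the pair of internal nodes $\langle n_1, n_2 \rangle$ augmented with the POSG state $s$ (or a belief thereof), which is again a \emph{finite} set. The NOSG action at each stage is the joint node-level decision rule, ranging over the fixed compact metric space identified above; the observation set is again the trivial $\{\mathrm{NULL}\}$, and rationality lets both agents observe each other's chosen decision rules, so the model is a zs-SOSG in the sense of Definition~\ref{def:zsSOSG}. One then checks that $\dot{T}$ and $\dot{R}$ depend continuously on the action (they are bilinear-type expressions in the node decision rules composed with fixed stochastic matrices), so the technical hypotheses of Ghosh et al.\ --- metric compact action spaces, finite state space, suitable continuity/measurability of transitions and rewards --- are met. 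Invoking their infinite-horizon result then yields existence of the value (and a rational joint stationary policy) for the NOSG, and since the NOSG value function coincides with the zs-POSG value function restricted to FSC policies (by the same sufficiency-of-statistics argument underlying Lemma~\ref{lemma:sufficiency_statistic}, adapted to the controller-node statistic), the value of the original infinite-horizon zs-POSG under the FSC restriction exists.

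The main obstacle I expect is verifying the continuity/compactness preconditions of \cite{ghosh2004zero} with full rigor and confirming that the ``shared observation / observe-each-other's-actions'' reinterpretation is legitimate in the infinite-horizon discounted (or otherwise well-defined) regime rather than just the finite-horizon one --- in particular, that restricting to \emph{stationary} node-level decision rules does not lose the value, and that the reward stream is well-defined (e.g.\ via discounting or an appropriate averaging criterion, which the paper should state). A secondary subtlety is that Ghosh et al.'s model may assume a fixed finite state space with the \emph{full} history being summarized by the current state; here the controller-node-plus-POSG-state augmented state does exactly this, but one must argue that no additional memory is needed precisely because the agents are \emph{assumed} to use finite-state controllers --- so the restriction is what makes the reduction go through, and this dependence on the assumption should be flagged explicitly, as the paper already does in the paragraph preceding the corollary.
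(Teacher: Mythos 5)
Your proposal takes essentially the same route as the paper, which itself offers only an informal one-sentence justification: extend the plan-time NOSG reduction to the finite-state-controller setting (with joint controller nodes plus the POSG state as the augmented state, \`a la the Dec-POMDP controller formulations the paper cites), observe that the action spaces become compact metric spaces satisfying the technical hypotheses of Ghosh et al., and invoke their infinite-horizon existence result. Your version is in fact considerably more explicit than the paper's about the construction and about the continuity, stationarity, and reward-criterion conditions that still need to be verified.
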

\noindent
In that way, our reduction shows that some of the properties established by
Ghosh et al. for a limited subset of zero-sum stochastic games, in fact extend
to a much broader class of problems.

\section{Related Work}
\label{sec:related_work}
There is rich body of literature on zero-sum games, and we provide pointers to
the most relevant works here. 
The concave and convex structure we have found for the zs-POSG value function 
is similar to the saddle point structure associated with min-max equilibria \cite{aubin1998optima}.
Note, however, that we have defined the value function in terms of the distribution over information, rather than the substantially different space of \emph{joint strategies}, solving of which requires flattening the game to normal form.
That is, our results tell us something about the value of acting using the current information, thus they may give insight into games of general partial observability.

A recent paper that is similar in spirit to ours is by Nayyar et al.\ \cite{nayyar2014common} who introduce a so-called Common Information Based Conditional Belief --- a probability distribution over AOHs and the state conditioned on common information --- and use it to design a dynamic-programming approach for zs-POSGs. 
This method converts stages of the zs-POSG to Bayesian Games for which the type distribution corresponds to the statistic at that stage.
However, since their proposed statistic is a distribution over joint AOHs \emph{and states}, the statistic we propose in this paper is more compact. 
More importantly, Nayyar et al.\ do not provide any results regarding the structure of the value function, which is the main contribution of our paper.

Hansen et al. \cite{hansen2004dynamic} present a dynamic-programming approach
for finite-horizon (general sum) POSGs that iteratively constructs
sets of one-step-longer (pure) policies for all agents. At every iteration, the sets of individual policies are pruned by removing dominated policies.  
This pruning is based on a different statistic called \emph{multi-agent belief}: a
distribution over states and policies of other agents. Such a multi-agent belief
is sufficient from the perspective of an individual agent to determine its best
response (or whether some of its policies are dominated).  A more generalized
investigation of individual statistics in decentralized settings is given
by Wu \& Lall \cite{Wu14CDC}. 
However, these notions are not a sufficient statistic for the past joint policy
from a \emph{designer perspective} (as is the proposed plan-time sufficient statistic in this paper). In fact, they are complementary and we hypothesize that they can be fruitfully combined in future work.

There are works from game theory literature that present structural results
on the value function of so-called ‘repeated zero-sum games with incomplete information’ \cite{mertens1971value,ponssard1975zero,ponssard1980some,ponssard1973zero}.
These can best be understood as a class of two-player extensive form games that lie in between
Bayesian games and POSGs: at the start of the game, nature determines the state
(a joint type) from which each agent makes a private observation
(i.e., individual type), and subsequently the agents take actions in turns, thereby observing the actions of the opponent. 
The models for which these results have been proven are therefore substantially less general than the zs-POSG model we consider.

For various flavors of such games, it has been shown that the value function has a concave/convex
structure: cases with incomplete information on one side \cite{ponssard1973zero,sorin2003stochasticgameswithincompleteinformation}, and cases with
incomplete information on both sides where `observations are independent' (i.e., where the 
distribution over joint types is a product of individual type distributions)
\cite{ponssard1975zero} or dependent (general joint type distributions)
\cite{mertens1971value,ponssard1980some}.
These results, however, crucially depend on the alternating actions and the static state
and therefore do not extend to zs-POSGs.

A game-theoretic model that is closely related to the POSG model is the \emph{Interactive POMDP} or \emph{I-POMDP}
\cite{gmytrasiewicz2004interactive}.  In I-POMDPs, a (subjective) belief $b_i(s, \zeta_j)$ is
constructed from the perspective of a single agent as a probability
distribution over states and the \emph{types}, $ \zeta_j $, of the other agent.
A \emph{level-$k$} I-POMDP agent $i$ reasons about level-$(k-1)$ types $\zeta_j$.
Since each $\zeta_j$ fully determines the future policy of the other agent~$j$,
an I-POMDP can be interpreted as a best-response POMDP similar to the one
introduced by Nair et al.~\cite{nair2003taming} (discussed in
Section~\ref{sec:posg:reductiontoNOSG}), with the difference that the state also
represents which policy the other agent uses. The differences mentioned in 
Section~\ref{sec:posg:reductiontoNOSG}
also apply here; where an I-POMDP can be used to compute a best-response, our
formulation is aimed at computing a Nash equilibrium.

\section{Conclusions and Future Work}
This paper presents a structural result on the shape of the value function of two-player zero-sum games of incomplete information, for games of static state and dynamic state, typically modeled as a Bayesian Game (BG) and Partially Observable Stochastic Game (POSG) respectively.
We formally defined the value function for both types of games in terms of an information distribution called the sufficient plan-time statistic: a probability distribution over joint sets of private information (originally used in the collaborative setting \cite{Oliehoek13IJCAI}). 
Using the fact that this probability distribution can be decomposed into a marginal and a conditional term, 
we presented that in the zero-sum case value functions of both types of games exhibit concavity (convexity) in the space of marginal statistics of the maximizing (minimizing) agent, for every conditional statistic.
In the multi-stage game, this structure of the value function is preserved on every stage. 
Thus, our formulation enables us to make statements about how value generalizes as a function of the information distribution.
Lastly, we showed how the results allow us to reduce our zs-POSG to a model with shared observations, thereby transferring properties of this narrower class of games to the zs-POSG.

We hope that this result leads to solution methods that exploit the structure of the value function at every stage, as recently such developments have been made in the field of cooperative multi-agent problems \cite{MacDermed13NIPS26}.
In particular, we believe that heuristic methods that identify useful (conditional) statistics to explore, or point-based methods that iteratively select statistics to evaluate \cite{MacDermed13NIPS26,spaan2005perseus} may be adapted for the zs-POSG case.

\vspace{.2cm}
\noindent \textbf{Acknowledgments}
\noindent This research is supported by the NWO Innovational Research Incentives Scheme Veni (\#639.021.336) and NWO DTC-NCAP (\#612.001.109) project.

\appendix 

\section{Appendix}
{ 

\numberwithin{equation}{section}

\begin{proof}[Proof of Lemma \ref{lemma:VBR_linear_in_statspace_FOBG}]
We will prove item 1: 
\begin{align*}
\V[\text{BR1}]{\fobg}(\sigma_{m,1} \sigma_{c,1}, \delta_2) =  \vec{\sigma}_{m,1} \cdot \vec{r}_{[ \sigma_{c,1}, \delta_2 ]} 
\end{align*}

\noindent
The Q-value definition is expanded in order to bring the marginal term $\statmarg[1]{}$ to the front of the equation:
\begin{IEEEeqnarray}{rCl}
\IEEEeqnarraymulticol{3}{l}{
\Q{\fobg}(\sigma, \delta) 
	\numeq{\ref{eq:bg:qvalue-definition}, \ref{eq:fobg:qvalue-definition}}
		\sum_{\theta \in \Theta} \sigma(\theta) 
			\sum_{a \in \mathcal{A}} 
				\delta(a | \theta) R(\theta, a) 
}\nonumber \\ 
	\, &=&	
        \sum_{\theta_1 \in \Theta_1}  \hspace{-.5mm}
            \sigma_{m,1}(\theta_1) 
		    \sum_{\theta_2 \in \Theta_2}  \hspace{-.5mm}
               \sigma_{c,1}(\theta_2 | \theta_1) 
	    \sum_{a_1 \in \mathcal{A}_1} \hspace{-.5mm}
			 \delta_1(a_1 | \theta_1) 
			 \nonumber \\	&&\>
	    \sum_{a_2 \in \mathcal{A}_2} \hspace{-.5mm}
			 \delta_2(a_2 | \theta_2)   
			R(\jointtype{}, \jointact{} ) . \hspace{5mm}
 \label{eq:fobg:expanded-QR} 
\end{IEEEeqnarray}
A maximization over stochastic decision rules conditioned on $\theta_1$ is equivalent to choosing a maximizing action for each $\theta_1$. Thus, we can rewrite the best-response value function as follows:
\begin{IEEEeqnarray*}{rcl}
\IEEEeqnarraymulticol{3}{l}{
\V[\text{BR1}]{\fobg}(\sigma, \delta_2)  =
		\max\limits_{\delta_1 \in \sdas{1}} 
		\Q[R]{t}(\sigma, \langle \delta_1,  \delta_2 \rangle )  
}\nonumber \\ 
	&\numeq{\ref{eq:fobg:expanded-QR}}&
		  \max\limits_{\delta_1 \in \sdas{1}}
		  \bigg[ \sum_{\theta_1 \in \Theta_1} 
            \sigma_{m,1}(\theta_1) 
		    \sum_{\theta_2 \in \Theta_2} 
               \sigma_{c,1}(\theta_2 | \theta_1) 
	    \sum_{a_1 \in \mathcal{A}_1} \hspace{-1mm}
			 \delta_1(a_1 | \theta_1)  
\nonumber \\ &&\>		
	    \sum_{a_2 \in \mathcal{A}_2} \hspace{-1mm}
			 \delta_2(a_2 | \theta_2)   
			R(\langle \theta_1, \theta_2 \rangle, \langle a_1, a_2 \rangle) \bigg] \\
     &=& \sum_{\theta_1 \in \Theta_1} 
           \sigma_{m,1}(\theta_1) 
           \max\limits_{a_1}  \hspace{-0.5mm}
			\bigg[		   
		    	   \sum_{\theta_2 \in \Theta_2}  \hspace{-1mm}
               \sigma_{c,1}(\theta_2 | \theta_1) 
      	   \hspace{-1mm}
      	    \sum_{a_2 \in \mathcal{A}_2}
      	    \hspace{-1mm}
			 \delta_2(a_2 | \theta_2)  
			R(\theta, a)
           \vphantom{\sum_{a_2}} \bigg]  \\ 
    &\numeq{\ref{eq:fobg:rvector}}&
      	 \sum_{\theta_1 \in \Theta_1} 
          \sigma_{m,1}(\theta_1) 
           \vec{r}_{[\sigma_{c,1}, \delta_2 ]}(\theta_1) 
    	  \numeq{\text{vec. not.}} \vec{\sigma}_{m,1} \cdot
           \vec{r}_{[\sigma_{c,1}, \delta_2 ]} . 
\end{IEEEeqnarray*}
As it is possible to write $\V[\text{BR1}]{\fobg}$ as an inner product of two vectors, $\V[\text{BR1}]{\fobg}$ is linear in $\Delta(\Theta_1)$ for all $\sigma_{c,1}$ and $\delta_2$.
Analogously, $\V[\text{BR2}]{\fobg}$ is linear in $\Delta(\Theta_2)$ for all $\sigma_{c,2}$ and $\delta_1$. 
\end{proof}

\begin{proof}[Proof of Lemma \ref{lemma:sufficiency_statistic}]
\label{appendix:sufficiency_statistic}
The proof is largely identical to the proof of correctness of sufficient statistics in the collaborative setting \cite{Oliehoek13IJCAI}.
 For the final stage $t = h-1$, we have the following:
$
\optQ{t}(\pjp{t}, \AOH{t}, \dr{t})
	= R(\AOH{t}, \dr{t}) = \optQ{t}(\stat{t}, \AOH{t}, \dr{t}).
$
As induction hypothesis, we assume that at stage $t+1$, $\stat{t+1}$ is a sufficient statistic, i.e.: 
\begin{IEEEeqnarray}{rCl}
\optQ{t+1}(\pjp{t+1}, \AOH{t+1}, \dr[\text{pp}]{t+1}) 
	= \optQ{t+1}(\stat{t+1}, \AOH{t+1}, \dr[\text{ss}]{t+1}). \hspace{5mm}
\label{eq:induction_hypothesis_statistics}
\end{IEEEeqnarray}
We aim to show that at stage $t$, $\stat{t}$ is a sufficient statistic:
\begin{IEEEeqnarray}{rCl}
\optQ{t}(\pjp{t}, \AOH{t}, \dr[\text{pp}]{t}) 
	&=& \optQ{t}(\stat{t}, \AOH{t}, \dr[\text{ss}]{t}). \quad
\label{eq:qvalues_equal_induction_proof}
\end{IEEEeqnarray}
We substitute the induction hypothesis into \eqref{eq:posg:qvalue-ss}:
\begin{IEEEeqnarray*}{rcl}
	 \optQ{t}(\pjp{t}, \AOH{t}, \dr[\text{pp}]{t}) &\numeq{\ref{eq:posg:qvalue-ss}} 	&
		 R(\AOH{t}, \dr[\text{pp}]{t}) +  
	  \sum_{\act{t} \in \mathcal{A}} \sum_{\obs{t+1}} 
            \text{Pr}(\AOH{t+1} | \AOH{t}, \dr[\text{pp}]{t}) \nonumber \\ && 
            \optQ{t+1}(\pjp{t+1}, \AOH{t+1}, \optdr[\text{pp}]{t+1}) \\
	&\numeq{\ref{eq:induction_hypothesis_statistics}} &
	    R(\AOH{t}, \dr[\text{ss}]{t}) 
  		+
        \sum_{\act{t} \in \mathcal{A}} \sum_{\obs{t+1}} 
			\text{Pr}(\AOH{t+1} | \AOH{t}, \dr[\text{ss}]{t}) \nonumber \\ && 
            \optQ{t+1}(\stat{t+1}, \AOH{t+1}, \optdr[\text{ss}]{t+1}) 
=
   		\optQ{t}(\stat{t}, \AOH{t}, \dr[\text{ss}]{t}).
\end{IEEEeqnarray*}

\noindent
Furthermore, decision rules $\optdr[1,pp]{t+1}$ (based on the past joint policy) and $\optdr[1,ss]{t+1}$ (based on the sufficient statistic) are equal:
\begin{IEEEeqnarray*}{rcl}
\optdr[1,pp]{t+1}
	&\numeq{\eqref{eq:posg:decision-selection-value-ss1}}&
		 \argmax\limits_{\dr[1]{t+1} \in \sdas{1}}              
		 	\min\limits_{\dr[2]{t+1} \in \sdas{2}} \bigg[ \vphantom{\sum_\theta} 
		 	\sum_{\AOH{t+1} \in \AOHs{t+1}} 
           \text{Pr}(\AOH{t+1} | b^0, \pjp{t+1}) \nonumber \\ && 
           \optQ{t+1}(\pjp{t+1}, \AOH{t+1},
		    \jointdr{t+1}
            )  
             \vphantom{\sum_\theta} \bigg]
             \\
         &\numeq{\eqref{def:plantimesufficientstatistic}} &
             \argmax\limits_{\dr[1]{t+1} \in \sdas{1}}  
             \min\limits_{\dr[2]{t+1} \in \sdas{2}} 
             \bigg[
             \sum_{\AOH{t+1} \in \AOHs{t+1}} 
             \stat{t+1}(\AOH{t+1}) \nonumber \\ && 
             \optQ{t+1}(\stat{t+1}, \AOH{t+1}, \jointdr{t+1} )
			 \bigg] \numeq{\ref{eq:posg:decision-selection-value-ss1}}
             \optdr[1,ss]{t+1} .
\end{IEEEeqnarray*}

\noindent
Analogous reasoning holds for $\optdr[2,ss]{t+1}$. Thus, by induction, $\stat{t}$ is a sufficient statistic for $\pjp{t}$, $\forall t \in 0 \ldots h-1$.
\end{proof}

\begin{proof}[Proof of Lemma \ref{lemma:VBR_linear_in_statspace}]
\label{appendix:vbr_linear_in_statspace}
By the results of Lemma \ref{lemma:VBR_linear_in_statspace_FOBG} and the results from Section \ref{sec:theoryPOSG:equivalence}, we know the best-response value function $\V[\text{BR1}]{h-1}$ to be linear in $\statspace[1]{h-1}$.
For all other stages, we assume the following induction hypothesis:
\begin{IEEEeqnarray}{rCl}
\V[\text{BR1}]{t+1}(\statmarg[1]{t+1}\statcond[1]{t+1}, \ppol[2]{t+1} )   
	=  \statmarg[1]{t+1} \cdot
          \vec{\nu}_{[ \statcond[1]{t+1}, \ppol[2]{t+1}]}.
\label{eq:inductionhypothesis}
\end{IEEEeqnarray}
\noindent
For the inductive step we aim to prove that at the current stage $t$ the following holds:
\begin{IEEEeqnarray}{rCl}
\V[\text{BR1}]{t}(\statmarg[1]{t} \statcond[1]{t}, \ppol[2]{t})   
	=  \vstatmarg[1]{t} \cdot
          \vec{\nu}_{[ \statcond[1]{t}, \ppol[2]{t}]}.
\label{eq:maintheorem_to_proof}
\end{IEEEeqnarray}
\noindent
Let $\Q[R]{t}$ be a function similar to \eqref{eq:fobg:qvalue-definition} that defines reward for a given statistic and joint decision rule. 
We expand the definition of $\V[\text{BR1}]{t}$.
For notational convenience we write $\stat{t}$ instead of $\statmarg[i]{t} \statcond[i]{t}$, but keep in mind that we consider statistics corresponding to conditional $\statcond[i]{t}$. 
\begin{IEEEeqnarray}{rcl}
\IEEEeqnarraymulticol{3}{l}{
	\V[\text{BR1}]{t}(
		\stat{t}, \ppol[2]{t}	
	) 
	\numeq{\ref{eq:posg:bestresponsedef:derivation}}
		\max\limits_{\ppol[1]{t} \in \Pi_1^t} 
		\V{t}(
			\stat{t}, \jointppol{t}
		) 
} \nonumber \\  
&=& \hspace{-1mm}
		\max\limits_{\ppol[1]{t} \in \Pi_1^t} 
		\bigg[
	    \Q[R]{t}(
			\stat{t}, \jointdr{t}			
	    	) + 			
			\V{t+1}( \text{U}_\text{ss}( \stat{t}, \dr{t} ), \jointppol{t+1})		
	    \bigg] \nonumber \\
	&=& \hspace{-1mm} 
		\max\limits_{\dr[1]{t} \in \sdas{1}} 
		\max\limits_{\ppol[1]{t+1}} 
		\bigg[
	    \Q[R]{t}(
			\stat{t+1}, \jointdr{t}
	    	) 
	    +  
	    \V{t+1}(
			\stat{t+1}, \jointppol{t+1}
		) 
	    \bigg] \nonumber \\
	&=& \hspace{-1mm}
		\max\limits_{\dr[1]{t} \in \sdas{1}} \bigg[ 
			\vphantom{\max\limits_{\ppol[1]{t+1}}}
	    		\Q[R]{t}(
				\stat{t}, \jointdr{t}
	    		) + 
	    \max\limits_{\ppol[1]{t+1}} \bigg[
	   		\V{t+1}(
				\stat{t+1}, \jointppol{t+1}	
   			) 
	    \bigg] \bigg] \nonumber  \\	    
\hspace{-.5mm}	&\numeq{\ref{eq:posg:bestresponsedef:derivation}}& \hspace{-.5mm}
		\max\limits_{\dr[1]{t} \in \sdas{1}}
	    \bigg[ 
	    		\Q[R]{t}(
				\stat{t}, \jointdr{t}
	    		) + 
	    		\V[\text{BR1}]{t+1}(
	    			\stat{t+1}, \jointppol{t+1}
	    		)
	    \bigg] .
\label{eq:posg:immediate-result-brvaluefunction}
\end{IEEEeqnarray}
Here, $\text{U}_\text{ss}$ is the statistic update rule, defined in accordance with \eqref{eq:posg:statistic-update-rule}.
We make the decomposition of $\stat{t}$ into the marginal and conditional terms explicit again.
Immediate reward $\Q[R]{t}$ can be expanded similar to \eqref{eq:fobg:expanded-QR}:
\begin{IEEEeqnarray}{rCl}
\IEEEeqnarraymulticol{3}{l}{
\Q[R]{t}( \statmarg[1]{t} \statcond[1]{t}, \jointdr{t} ) =
 		\statmarg[1]{t}(\AOH[1]{t})
    		\statcond[1]{t}(\AOH[2]{t} | \AOH[1]{t})
    	   	\sum\limits_{\act[1]{t} \in \mathcal{A}_1}
    	   		\dr[1]{t}(\act[1]{t} | \AOH[1]{t} )
}  \nonumber  \\ \, 	
\quad\quad 	   	\sum\limits_{\act[2]{t} \in \mathcal{A}_2}
    			\dr[2]{t}(\act[2]{t} | \AOH[2]{t} )
		R(\jointAOH{t}, \jointact{t}) .
\label{eq:posg:expanded-QR}
\end{IEEEeqnarray}		
\noindent
We expand $\V[\text{BR1}]{t+1}$ using the induction hypothesis in order to bring the marginal distribution $\statmarg[1]{t}$ to the front:
\begin{IEEEeqnarray}{rCl}
\IEEEeqnarraymulticol{3}{l}{
	\V[\text{BR1}]{t+1}(
			\statmarg[1]{t+1}\statcond[1]{t+1}, \ppol[2]{t+1}   
		) 
	\numeq{\ref{eq:inductionhypothesis}} 
		\statmarg[1]{t+1} \cdot
        \vec{\nu}_{[ \statcond[1]{t+1}, \ppol[2]{t+1}]} 
     } \nonumber \\ 
        &=&	\sum\limits_{\AOH[1]{t+1} \in \AOHs[1]{t+1}}
  			\statmarg[1]{t+1}(\AOH[1]{t+1})
    		\vec{\nu}_{[ \statcond[1]{t+1}, \ppol[2]{t+1}]}(\AOH[1]{t+1}) \nonumber \\ &\numeq{\ref{eq:posg:statistic-update-rule}}&
   		\sum\limits_{\AOH[1]{t} \in \AOHs[1]{t}}
	 		\statmarg[1]{t}(\AOH[1]{t})
    		\sum\limits_{\AOH[1]{t} \in \AOHs[2]{t}}
    	   		\statcond[1]{t}(\AOH[2]{t} | \AOH[1]{t})
    	   	\sum\limits_{\act[1]{t} \in \mathcal{A}_1}
    	   		\dr[1]{t}(\act[1]{t} | \AOH[1]{t} )
\nonumber \\ && 
    	   	\sum\limits_{\act[2]{t} \in \mathcal{A}_2}
    			\dr[2]{t}(\act[2]{t} | \AOH[2]{t} ) 
    		\sum\limits_{\obs{t+1} \in \mathcal{O}_1} \sum\limits_{\obs{t+1} \in \mathcal{O}_2}
    		\nonumber \\ && 
    		\text{Pr}( \jointobs{t+1} |
    				   \jointAOH{t},
   				   \jointact{t})
    			\vec{\nu}_{[ \statcond[1]{t+1}, \ppol[2]{t+1}]}
    			(\AOH[1]{t+1}). \hspace{9mm}
\label{eq:expanded_VBR_tplus1} 
\end{IEEEeqnarray}
Filling the expanded equations into \eqref{eq:posg:immediate-result-brvaluefunction} and factorizing gives:
\begin{IEEEeqnarray}{rcl}
\IEEEeqnarraymulticol{3}{l}{
	\V[\text{BR1}]{t}(
		\statmarg[1]{t}\statcond[1]{t},  \ppol[2]{t}
		) 	
	\numeq{\ref{eq:posg:bestresponsedef:derivation}}
\max\limits_{\dr[1]{t} \in \sdas{1}}
	    \bigg[ 
	    		\Q[R]{t}(\statmarg{t}, \jointdr{t}  | \statcond[1]{t}) + 
      } \nonumber \\
&&	    		\V[\text{BR1}]{t+1}(\statmarg[1]{t+1}, 
	    		                    \ppol[2]{t+1} | \statcond[1]{t+1})
	    \bigg] 
	 \numeq{\ref{eq:posg:expanded-QR}, 
	 		 \ref{eq:expanded_VBR_tplus1}} \nonumber \\
&&     \max\limits_{\dr[1]{t} \in \sdas{1}}
	    \bigg[ 
		    	\sum_{\AOH{t} \in \AOHs[1]{t}} 
               \statmarg[1]{t}(\AOH[1]{t}) 
		    \sum_{\AOH{t} \in \AOHs[2]{t}} 
                \statcond[1]{t}(\AOH[2]{t} | \AOH[1]{t})  
     \sum\limits_{\act[1]{t} \in \mathcal{A}_1}
\nonumber \\ &&\>
			   \dr[1]{t}(\act[1]{t} | \AOH[1]{t})
	    	    \sum\limits_{\act[1]{t} \in \mathcal{A}_2}
			   \dr[2]{t}(\act[2]{t} | \AOH[2]{t}) 
		\bigg( 
		 R(\jointAOH{t},  \jointact{t}) + 
\nonumber \\ &&\>		 
		 		 \sum\limits_{\obs{t+1} \in \mathcal{O}}
    			\text{Pr}( \obs{t+1} | \AOH{t}, \act{t} )
    			\vec{\nu}_{[ \statcond[1]{t+1}, \ppol[2]{t+1}]} (\AOH[1]{t+1}) 
    	    \bigg) \bigg]. 
\label{eq:result_factorization}
\end{IEEEeqnarray}
Note that the vector is indexed by the conditional $\statcond[1]{t+1}$.
While this conditional is dependent on $\dr[2]{t}$, it is not dependent on $\dr[1]{t}$, allowing us to remove the maximization over decision rules $\dr[1]{t}$ from the equation.
As a maximization over decision rules conditioned on individual AOH $\AOH[1]{t}$ is equal to choosing the maximizing action for each of these AOHs, we can rewrite \eqref{eq:result_factorization} as follows: 
\begin{IEEEeqnarray}{rcl}
\IEEEeqnarraymulticol{3}{l}{
	\V[\text{BR1}]{t}(\stat{t}, \ppol[2]{t})
} \nonumber \\
    =
& 
    \sum\limits_{\AOH[1]{t} \in \AOHs[1]{t}} \statmarg[1]{t}(\AOH[1]{t}) \max\limits_{\act[1]{t}} 
            \bigg[ \sum_{\AOH{t} \in \AOHs[2]{t}} \statcond[1]{t}(\AOH[2]{t} |
                \AOH[1]{t}) \sum\limits_{\act[1]{t} \in \mathcal{A}_2}
                \dr[2]{t}(\act[2]{t} | \AOH[2]{t}) 
\nonumber 
\\ 
&
        \bigg( 
        \>R(\AOH{t}, \act{t} ) 
	+
        \sum\limits_{\obs{Ft+1} \in \mathcal{O}} \text{Pr}( \obs{t+1} | \AOH{t}, \act{t}) 
\vec{\nu}_{[ \statcond[1]{t+1}, \ppol[2]{t+1}]} (\AOH[1]{t+1}) \bigg) \bigg]
\nonumber \\
        \numeq{\ref{eq:concaveconvex:valuevector}}
&
    		\sum\limits_{\AOH[1]{t} \in \AOHs[1]{t}}
	    			\statmarg[1]{t}(\AOH[1]{t})
	    	\vec{\nu}_{[ \statcond[1]{t}, \ppol[2]{t} ]}(\AOH[1]{t})	 
	\numeq{\text{vec. not.}} 	\vstatmarg[1]{t} \cdot
	    	\vec{\nu}_{[ \statcond[1]{t}, \ppol[2]{t} ]}. \hspace{10mm}    \label{eq:theorem_linearity_result}
\end{IEEEeqnarray}
 This corresponds to \eqref{eq:maintheorem_to_proof}.
Therefore, by induction, best-response value function 
$\V[\text{BR1}]{t}$
is linear in $\statspace[1]{t}$ for a given $\statcond[1]{t}$ and $\ppol[2]{t}$, for all stages $t=0\ldots h-1$.
Analogously, $\V[\text{BR2}]{t}$ is a linear function in $\statspace[2]{t}$ for a given $\statcond[2]{t}$ and $\ppol[1]{t}$, for all stages $t=0\ldots h-1$.
\end{proof}

} 

\newpage

\bibliographystyle{ecai}
\bibliography{bibliography}

\end{document}